\documentclass{article}

\usepackage[preprint]{neurips_2025}

\usepackage[utf8]{inputenc} 
\usepackage[T1]{fontenc}    
\usepackage{hyperref}       
\usepackage{url}            
\usepackage{booktabs}       

\usepackage{amsmath}        
\usepackage{amsthm}

\usepackage{algorithm}
\usepackage{algorithm}
\usepackage{algpseudocode}  

\usepackage{amsfonts}       
\usepackage{nicefrac}       
\usepackage{microtype}      
\usepackage{xcolor}         

\usepackage{booktabs}
\usepackage{multirow}
\usepackage{array}
\usepackage{xcolor}

\usepackage{float}

\usepackage{geometry} 

\usepackage{natbib}

\usepackage{graphicx}


\usepackage[utf8]{inputenc} 
\usepackage[T1]{fontenc}    
\usepackage{hyperref}       
\usepackage{url}            
\usepackage{booktabs}       
\usepackage{amsfonts}       
\usepackage{nicefrac}       
\usepackage{microtype}      
\usepackage{xcolor}         

\title{Relative Overfitting and Accept-Reject Framework}

%

\author{
  Yanxin Liu$^{*}$ \\
  Yunnan University \\
  \texttt{12024213042@stu.ynu.edu.cn} \\
  \And
  Yunqi Zhang$^{*}$ \\
  Yunnan University \\
  \texttt{yunqizhang@ynu.edu.cn} \\
}

\begin{document}

\maketitle
\newtheorem{definition}{Definition}[subsubsection]
\newtheorem{theorem}{Theorem}[subsubsection]
\newtheorem{lemma}[theorem]{Lemma}
\newtheorem{assumption}[theorem]{Assumption}
\newtheorem{corollary}[theorem]{Corollary}

\begin{abstract}

The scaling of Large Language Models (LLMs) currently faces significant challenges. Model assembly is widely considered a promising solution to break through these performance bottlenecks. However, current ensembling methods are primarily guided by the statistical expectation that combining multiple models over large samples will lead to performance gains. We propose an ensemble framework that transitions from such stochastic, sample-dependent methods to a regular, controllable approach based on fine-grained model segmentation. This regularity governs how models are segmented to ensure performance improvement, how the magnitude of this improvement varies with model selection, and what factors determine its theoretical maximum. To formalize this pattern, we introduce the concept of'relative overfitting,' which is derived from the performance discrepancies between constituent models and builds a bridge between ensemble outcomes and the inherent attributes of these models. We detail the patterns of this framework within the domain of NLP and briefly describe its extensibility to other fields, such as computer vision (CV) and AI for science. Our approach was validated using both custom-built and pre-trained mainstream models across diverse benchmarks, including language modeling, long-context tasks, and question-answering (QA). The results indicate that the ensemble rules we proposed are generally effective and that we provide a rigorous proof of these rules in certain experimental scenarios. The proposed framework offers a new perspective for understanding ensemble theory and provides a systematic approach to addressing the performance bottlenecks of LLMs.

\end{abstract}

\section{Introduction}

The application of artificial intelligence, particularly large language models, has significantly improved people's productivity. For an extended period, scaling laws \cite{1, 2, 3}, driven by more data, greater computational power, and larger models, served as the guiding principle for training LLMs, allowing performance improvements through resource accumulation. However, with the rapid advancement of artificial intelligence, scaling laws face increasing scrutiny \cite{4, 5}, raising questions about whether they have reached a bottleneck. 

Ensemble learning has long been regarded as an effective means of overcoming the performance bottleneck of models. However, traditional ensemble learning theory relies on the improvement of performance expectations of multiple model combinations under large sample situations, which presents an irregularity; that is, we cannot ascertain whether the performance of an ensemble of two models will necessarily improve rather than decline, nor what the patterns of improvement are. 

In this work, we have constructed a framework that allows performance enhancement after ensemble learning to be transformed from being irregular and dependent on large samples into a systematic ensemble achieved through a fine-grained segmentation. This enables us to better understand the operational logic of ensemble methods to enhance their effectiveness in practical applications.

At the same time, we proposed the concept of relative overfitting. It posits that models with different structures exhibit fine-grained differences when modeling the target distribution. Within the same architecture, a model that better models the target distribution demonstrates relative overfitting compared to another model. This concept can be regarded as an extension of model diversity in ensemble learning, serving as a bridge between the performance of ensemble models and the inherent properties of the individual models involved.

Specifically, this paper proposes a universally applicable Accept-Reject (AR) framework and elaborates on it within the natural language processing (NLP) domain, primarily using language models as a medium. This choice is motivated by the abundance of scaling law-based model series in the field of NLP, which provides a robust experimental foundation. 

This framework introduces the concept of relative overfitting, linking the performance enhancement of ensemble models to the intrinsic properties of the models involved in the ensemble. Through a partition, we can maximize the favorable components of the ensemble while minimizing the unfavorable parts, thereby fully leveraging the advantages of each model within the ensemble. Moreover, this partitioning approach allows us to transform ensemble learning from being irregular and reliant on expectations under large samples to being systematic and controllable.

We have provided a detailed introduction to the AR law, which is based not only on our extensive experimental results across various model structures and datasets but also supported by rigorous proofs in specific scenarios. This facilitates a deeper understanding of our principles. The experimental results indicate that the principles we have proposed are universal, and by mastering the AR law, from an application perspective, this approach can achieve performance enhancements equivalent to or even superior to simply increasing the parameter count of LLMs in many cases, while significantly reducing parameter and computational costs. Additionally, from a theoretical standpoint, such regularity aids in our better understanding and control of this powerful tool, general learning.

\section{Related Work}
\label{gen_inst}

The extent of actual parameter requirements has been previously investigated. Theoretically, based on the Johnson-Lindenstrauss lemma \cite{13}, even a random projection, sampling a dimensionality reduction matrix from a Gaussian distribution, can compress N-dimensional features into $O(\log n / \epsilon^2)$ dimensions. Experimentally, the findings of ALBERT \cite{12} indicate that applying low-rank decomposition to BERT embeddings, reducing their dimensionality from 768 to 128, results in negligible performance degradation. These studies inspire us to propose a relative overfitting approach as a medium to communicate the underlying patterns between the original model and the ensemble model.

Before this work, considerable research had been conducted on model fusion. Mainstream approaches include traditional ensemble methods, such as the multi-model voting schemes commonly used in Kaggle competitions \cite{24}, which involve weighted combinations of outputs. Recent attempts within the LLM domain involve merging multiple models into a single entity \cite{25}. Furthermore, some earlier studies have explored the fusion of output levels for LLM \cite{26}. 

Unlike these studies, our research focuses on finding a bridge between the original model and the ensemble model through the concept of relative overfitting. Within the AR framework, our aim is to transform random ensembles into regular and controllable ensembles. We demonstrate that many methods can be categorized under this framework, allowing a better application of these methods through AR law.

\section{Relative Overfitting: Bridging Original and Ensemble Models}

This chapter will utilize SLMs and LLMs as exemplars exhibiting relative overfitting (assuming that LLMs generally achieve higher granularity in modeling the target distribution than SLMs within the same architecture) in the NLP domain to elaborate on the theoretical foundation of our method: the concept of relative overfitting.

According to the scaling law of LMs, models with more parameters generally demonstrate a greater capacity to represent details. This involves a trade-off between overfitting and underfitting. For models already proven excellent, both overfitting and underfitting are considered well-controlled. Therefore, our focus is on "relative overfitting and underfitting." A larger model possesses higher granularity in modeling the target distribution than a smaller one; an LLM exhibits relative overfitting compared to an SLM. This characteristic simultaneously drives its superior performance and creates limitations in other aspects. The underlying principle is the degree of fit to the target distribution. A model that fits the target distribution well inherently exhibits relative overfitting compared to a model that fits it poorly. Even with meticulous control, relative overfitting inevitably occurs compared to a lower-performing model. Consequently, this paper explores how to influence underfitting to influence overfitting, using LLMs and SLMs as the medium for discussion, given that LLMs typically relative overfitting concerning SLMs.

It is crucial to note that relative overfitting is not merely an extension of the concept of overfitting. Although both phenomena are due to model complexity, overfitting has associated control variables, and it can be managed by adjusting the training dataset without altering model parameters. However, relative overfitting lacks an external control variable, making it impossible to manage this phenomenon by manipulating another quantity. This is why we emphasize the endogenous nature of relative overfitting. Simply put, overfitting depends on the relationship between model complexity and dataset size, whereas relative overfitting primarily depends on the model's capacity to capture fine details of the target distribution.

In summary, relative overfitting enables one model to identify more semantic details than another. This means that LLMs can output sentences with substantial detail. Although generally beneficial, certain drawbacks that stem from this capability inspire our method. The issue arising from this higher semantic granularity is that although the output is more detailed, there is a greater likelihood of deviating from the core subject matter. LLMs strive to match the details of the target distribution as closely as possible, consequently increasing the probability of diverging from the main topic. Conversely, SLMs, constrained by their representational capacity, cannot inherently represent extensive detail and tend to concentrate on high-frequency words. In other words, an SLM might fail to represent adjectives, focusing instead on the high co-occurrence frequency of core components, whereas an LLM will pursue the most contextually appropriate and complete expression possible. Research indicates that in most scenarios, word distributions follow Zipf's law. Furthermore, high-frequency words preferentially fit within deep neural network frameworks \cite{53, 54}. The improvements offered by LLMs over smaller-parameter SLMs of the same architecture are primarily concentrated on low-frequency words; the fitting difference for extremely high-frequency words is relatively small between the two. Moreover, studies suggest that high-frequency words tend to be overestimated within commonly used frameworks, while inaccurately fitted low-frequency words are generally underestimated \cite{55, 56, 57, 58}.

We consider an LM $q$ and a Small Model (SM) $q'$.
For a specific high-frequency output class $h \in V$ (where $V$ is the set of all output classes), let $n = |S_h^q|$ and $m = |S_h^{q'}|$ denote the number of instances where the LM and SM predict class $h$, respectively.
Furthermore, let $P_L(h)$ and $P_S(h)$ represent the precision of LM and SM, respectively, when predicting class $h$ (i.e., $P(k_s=h \mid \text{model predicts } h)$).

Based on observations of model performance on high-frequency classes (detailed in Appendix \ref{B.1}), we propose the following assumption.

Assumption \ref{ass:1}: For a high-frequency output class $h$, we assume:
\begin{enumerate}
    \item \textit{Output Frequency Discrepancy}: The SM predicts class $h$ more frequently than the LM, that is, $n < m$.
    \item \textit{Limited LM Precision Advantage}: The LM precision advantage for class $h$ is bounded, specifically satisfying:
    \begin{equation}\label{eq:main_body_assumption1_bound}
    |P_L(h) - P_S(h)| < \frac{(m-n)P_S(h)}{n}
    \end{equation}
\end{enumerate}

This assumption directly leads to our first key theoretical result concerning the overall correct predictions for such mainstay classes:

Theorem \ref{thm:mainstay_deviation} (Mainstay Deviation): Under the assumption, for the high-frequency output class $h$, the total number of correct predictions by the SM, $N_S^h = m \cdot P_S(h)$, will be greater than that of the LM, $N_L^h = n \cdot P_L(h)$. This is formalized as:
\begin{equation}\label{eq:main_body_theorem1_inequality}
N_S^h > N_L^h \quad \text{or equivalently, } \quad m \cdot P_S(h) > n \cdot P_L(h)
\end{equation}
The Mainstay Deviation Theorem suggests that, for common categories where an SM exhibits a higher output propensity and the LM's precision edge is constrained, the SM may achieve a greater aggregate of correct identifications. This highlights a nuanced aspect of model scaling, where overall superiority does not always translate into superior performance in all submetric or output categories.

The complete formalization of all assumptions and detailed proofs are provided in Appendix \ref{A}.

Through this concept, we can better identify patterns to segment the ensemble of the original model in detail, allowing the performance of the ensemble model to reach its optimal potential.

\section{AR Framework}
\label{others}
We proposed an AR framework that quantifies the performance of the original model in a certain scenario through the concept of relative overfitting as a basis for segmentation, enabling the ensemble model to leverage the advantages of each model while minimizing the introduction of disadvantages. Without loss of generality, we will continue to use SLMs and LLMs as representative models for our discussion in the NLP domain. Section 4.1 will introduce the AR framework based on the simplest untrained output token-level ensemble. Sections 4.2, 4.3, and 4.4 specifically introduce our ensemble theoretical principles. All code is visible: https://anonymous.4open.science/r/130.

\subsection{Our Method}
As mentioned above, relative overfitting allows us to use the performance differences between the original models as a basis for segmentation, where we take the better-performing model (LLM) as the primary model and the other (SLM) as the secondary model. Based on the discussion of relative overfitting, we can assume that there will always be certain positions in SLM that exert a beneficial influence on LLM, and this influence can be simply quantified by the inherent properties of the models. The concept of segmentation is to ensure that the correct corrections from SLM to LLM are as widely accepted as possible, while erroneous changes are rejected as much as possible.

We will introduce a simple ensemble method as a medium for our theoretical discussion. The specific implementation is as follows: we allocate proportions based on certain inherent properties between SLM and LLM, thereby identifying the AR's segmentation point. Based on this segmentation point, we assign weights to the smaller and larger models and perform a weighted combination at the probability distribution layer of their outputs to obtain the final result. The weights assigned based on this segmentation point effectively establish a threshold to determine significant deviation. For the LLM's output, the SLM, holding minimal weight, generally has no impact, meaning that in most situations, we primarily utilize the LLM's detailed advantage and accept its judgment. However, once the conflict between the two reaches a critical point, the SLM strongly contradicts the LLM's judgment; we have reason to believe that the LLM's output progressively deviates from the core path. At this juncture, we consider rejecting the LLM's judgment. This enables us, in extreme situations, to take advantage of the core advantage of the SLM, thus achieving complementary advantages between models (as illustrated in \autoref{fig:my_first_figure}).

This methodology was selected to elucidate the AR framework due to the inherent property that both its outputs and weights reside entirely within the probability space. This characteristic offers an intuitive basis for discussing the AR framework. The key to the AR framework lies in using relative over-collaboration as a bridge to establish the relationship between the inherent properties of the original model, which serves as the basis for finding the segmentation points. Consequently, we have also extended our discussion of the AR framework by incorporating cosine similarity metrics, as detailed in the Appendix \ref{B.3}. Furthermore, based on the evidence presented in the Appendix \ref{A}, we believe that more methods embodying this concept can also be incorporated within this framework to achieve a more systematic ensemble.
\begin{figure}[t]
    \centering
    \includegraphics[width=12cm,height=14\baselineskip]{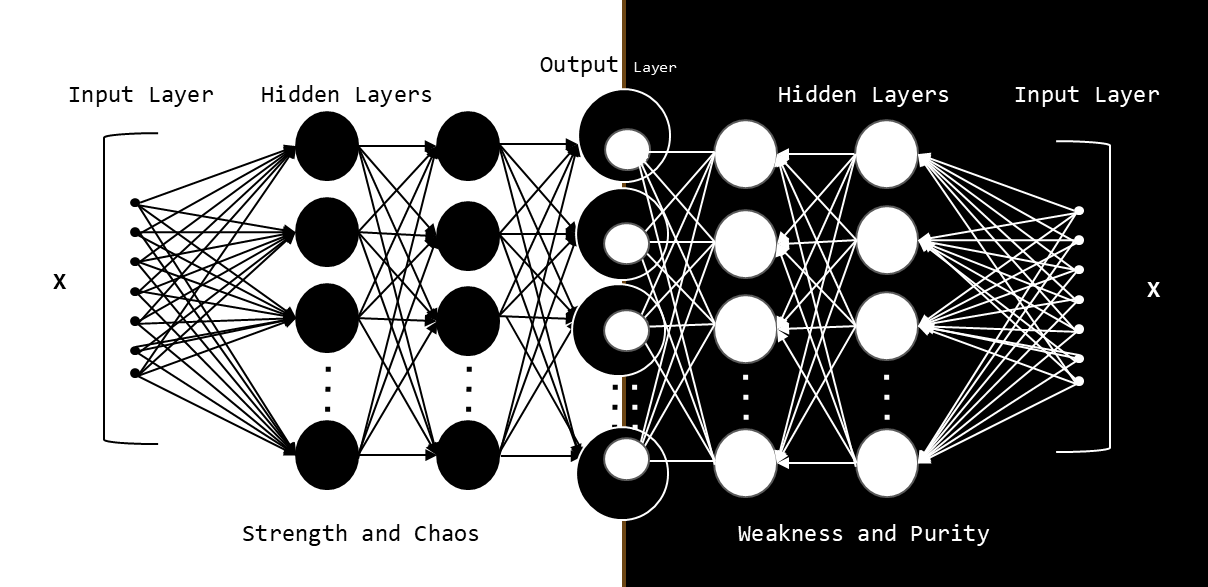} 
    \caption{A simple manifestation of an AR framework.}
    \label{fig:my_first_figure}
\end{figure}

Our segmentation framework allows us to separate adverse and beneficial influences at any fine-grained segmentation. In general, this means that even LMs with large performance disparities have their respective strengths and weaknesses. Our AR laws enable us to effectively leverage these advantages, no matter how minor they may seem. Specifically, applying softmax to the logit vectors $l_i$ ($i=1, 2$) output by the two models yields:
\begin{equation}
p_i = \mathrm{softmax}(l_i)
\end{equation}
Based on the threshold point of AR, an appropriate weight w (Weights corresponding to the LM) is determined, and a weighted sum is performed to obtain the final output:
\begin{equation}
p_{\text{ensemble}} = w \cdot p_1 + (1 - w) \cdot p_2
\end{equation}
It can be asserted that an SLM satisfying the threshold point of the AR condition will invariably exert a positive influence on the LLM. This influence is determined by the relative overfitting between the different models, is endogenous, and represents an inevitable outcome of the trade-offs involved. This explains its universal applicability, irrespective of model architecture or performance level. We discuss this point further in the subsequent experimental section, noting that while the magnitude of performance improvement may vary, an enhancement is consistently achievable.

Currently, this method only incurs linear complexity related to vocabulary size when a parallel implementation is considered. In many situations, compared to improving LLM performance by increasing the number of parameters, this training-free method offers significant savings in memory and computation time, particularly for long sequences based on the transformer architecture.

\subsection{AR Law}
The AR law constitutes the rule within the AR framework that defines the acceptance and rejection criteria for the judgments of various SLMs when operating with a fixed LLM. This chapter will illustrate this based on the ideas, discussions and experimental results of our approach, using the simple implementation of the AR framework described in Section 4.1 as an example.

First, we introduce the concept of the threshold point of AR. Let the parameter ratio between the LLM and the SLM be $k = \frac{|\theta_{\mathrm{LLM}}|}{|\theta_{\mathrm{SLM}}|}$, where $|\theta_{\mathrm{LLM}}|$ denotes the number of parameters in the LLM, and $|\theta_{\mathrm{SLM}}|$ denotes the number of parameters in the SLM. Since the performance of the same model can vary across different datasets, we use parameters as proxy indicators for the inherent attributes of the model. Although our discussion uses the parameter ratio as a metric, it is important to note that it represents the performance ratio and the underlying concept of relative overfitting. We define the weight ratio as $\beta = w/(1-w)$. Therefore, if the AR weight allocation ratio that achieves the optimal performance improvement is $\alpha$ ($\alpha$ is designed to maximize the performance improvement of the ensemble model $\beta$), then we have:
\begin{equation}
\alpha \propto k
\end{equation}
Second, we introduce the concept of the improvement domain, denoted as $[D, \infty]$, where $D < \alpha$. For any weight allocation ratio $\beta$, the model can improve performance when $\beta \in [D, \infty]$. Furthermore, when $\beta < \alpha$, the magnitude of performance improvement increases, and when $\beta > \alpha$, the magnitude of performance improvement decreases. When $\beta = \infty$, it approximates a single model. We also have:
\begin{equation}
R \propto k
\end{equation}
When $k$ is extremely large, meaning the performance (parameter) gap between the models is vast, $R$ approaches infinity ($R \to \infty$). In this scenario, we consider the performance improvement to be near zero. Of course, this situation rarely occurs. To verify this extreme case, we designed an experiment using a model with abysmal performance and very high perplexity to construct the AR framework. In this case, the performance improvement was almost negligible.

Finally, the magnitude of performance improvement is inversely proportional to the weight ratio $\beta$.

The points described above constitute the basic AR law, which is defined on the basis of the theoretical ideas presented earlier. Next, we will discuss the selection of the AR threshold point.

\subsection{The Threshold Point of AR}
Based on the ideas discussed in the previous section, we examine the threshold point of the optimal allocation ratio $\alpha$ based on the experimental results. 

We decompose the optimal ratio $\alpha$ into two conflicting components. The first is an ascending component. If the parameter ratio $k$ between the LLM and the SLM is large, we should use a larger $\alpha$ to better utilize the LLM's capability for detail, essentially its conventionally understood performance. According to Kaplan \cite{1}, LLM performance follows a power law concerning parameter count, suggesting($p$ is the power-law exponent):
\begin{equation}
\alpha \propto k^p
\end{equation}
The second component characterizes the magnitude at which the AR framework operates effectively. In reality, $k$ only describes the gap between LLM and SLM but does not capture the starting point of this gap. We introduce the parameter $\theta_{LLM}$ (the LLM's parameter count) to represent the starting point where this influence takes effect. For example, the ratio between parameters 14B, 7B, 14M, and 28M is also 2, but the magnitude of the differences is entirely different. Relative overfitting cases arising from different magnitudes will also vary. Therefore, as noted in Section 4.2, we discuss how to influence it based on the AR framework and law under a fixed LLM. The average optimal weight $\alpha$ of each model in various datasets in all experiments from Section 5.1 is presented in Appendix \ref{B.2}.

\subsection{Improvement Magnitude}

In this subsection, we discuss the magnitude of performance improvement contributed by the lower performing model to the higher performing model within the AR framework. Unlike previous discussions, although dependent on weights, the magnitude of performance improvement is influenced by more complex factors in practice than the relatively strict proportional and inverse relationships explored in Section 4.2. Based on the experimental results, we identify two empirical regularities. First, as the weight ratio (associated with the LLM) increases, the proportion and influence of the SLM decrease. However, this inverse relationship is not absolute; exceptions were observed in experiments. This is because the change in proportion is an integral part of the AR law itself: reducing the SLM's proportion serves the mechanism's requirements and is not merely about decreasing the SLM's influence. we see that the actual optimal weight ratio $\alpha$ increases only with $k$ (the parameter ratio). Secondly, as stated in Section 4.3. We observe that while the actual optimal weight ratio $\alpha$ increases with our experiments, it does not demonstrably increase with the scale of the main model parameters $\theta_{LLM}$. This provides evidence for the stability of the AR framework: that is, as long as an appropriate k is maintained, the increase in the model's performance remains stable with the growth of $\theta_{LLM}$, and in some scenarios, there is even a rising trend (as seen in Section 5.1). This means that, according to the AR law, we need not worry about a significant reduction in performance due to extensive main model parameters. This finding supports our belief that this approach can positively impact the bottleneck observed in scaling laws. Moreover, under the scaling law, as the LLM parameters increase, the rate of improvement decreases rapidly.

\subsection{Proof of part of the AR law}
Chapters 4.2, 4.3, and 4.4 present empirical conclusions derived from our tens of thousands of experimental results. In fact, we provide a rigorous proof concerning the AR law based on the ACC metric, under specific assumptions, as seen in Appendix \ref{A}. This proof not only demonstrates a part of the correctness of AR law, but its proof process also more comprehensively illustrates the ideas contained within the AR framework.

In particular, we can naturally extend the proof of similar laws to multi-model ensemble scenarios for wider applications. Taking the dual model scenario as an example is merely for the convenience of demonstrating the properties and experiments.

\section{Experiments}
\label{others}

This article chooses the splitting point based on AR law and conducts experimental verification using the introduced simple ensemble method. In the field of NLP, we designed experiments considering existing mainstream scaling-law-based LMs and custom-built LMs based on prevalent architectures. The aim is to verify the universality of our approach. First, we tested our method on several existing scaling law-based model series (e.g., GPT-2 \cite{33}). Second, we performed ablation studies using a model built on the Transformer \cite{31} architecture, deliberately avoiding task-specific design modifications and comparing with existing models. Finally, we retrained models of varying scales based on the mainstream architectures LSTM \cite{29, 30}, Transformer, and GRU \cite{32} for further experimentation. This chapter demonstrates that the proposed framework is universally applicable to various models, including mainstream models, custom models, and combinations thereof. Furthermore, based on the principle of relative overfitting, we validated the AR framework in computer vision (CV) and AI for science.

\subsection{Mainstream Language Model Testing}

This paper conducted experiments using the proposed AR framework on several pre-trained models. The tests were performed using GPT-2 \cite{33} on fundamental language modeling benchmarks including WikiText2, WikiText103 \cite{34}, text8 \cite{14}, Pile \cite{39, 40} (using a randomly sampled subset of the Pile test set), and 1BW \cite{38}, as well as on the long-context benchmark LAMBADA \cite{35} and the QA benchmarks CBT \cite{36} and the subject examination benchmark ARC \cite{37}. We also evaluated the performance of models with different architectures: Pythia \cite{41} (multiscale scaling law), RWKV \cite{42} (combining RNN and Transformer), Mamba \cite{43} (state-space model series), and Qwen \cite{44, 45} (commercial LLM) - using the AR framework for basic language modeling tasks. We used the LAMBADA dataset for the language modeling evaluation (perplexity) and its original long-context dependency evaluation to compare the behavior under the AR framework between general language modeling and specific tasks. 

The results in \autoref{tab1} demonstrate that all models achieve universal performance improvements through our AR framework. We can achieve superior performance in many scenarios with significantly fewer parameters and less computation time. For instance, on LAMBADA (PPL), Pile, and other benchmarks, our method consistently outperforms the strategy of simply increasing the LLM's parameter count, achieving superior parameters, computation time, and performance. This advantage becomes more pronounced as the model parameters increase. For example, based on the improvement rate observed when scaling GPT-2 from 762M to 1.5B, and roughly estimating the improvement from 1.5B to 3B using the power law model proposed in scaling laws, the performance of the 1.5B+762M combination almost universally surpasses the projected 3B model across all benchmarks. Notably, our method's advantages become more significant with larger models, while the linear complexity related to vocabulary size becomes negligible compared to increasing the parameter count. Furthermore, as the main model's parameters grow, the magnitude of performance improvement increases rather than diminishes, making us optimistic about its performance on even larger models and hopeful that it can contribute to alleviating the scaling law bottleneck.
\begin{table}
\caption{The table shows the test results of the classic model GPT2 based on the AR framework with four parameter sizes from 117M to 1.5B. LTD is the long context benchmark under the LAMBADA dataset. The results of RWKV, Mamba, Pythia, and Qwen can be found in the Appendix \ref{D.1}.}
\label{tab1}
\centering
\resizebox{\textwidth}{!}{
\begin{tabular}{llcccccccccc}
  \toprule
  LLM & SLM & WikiText2 & WikiText103 & text8 & 1BW & Pile(part) & LAMBADA & LAMBADA & CBT-CN & CBT-NE & ARC \\
  GPT2 & GPT2 & PPL $\downarrow$ & PPL $\downarrow$ & BPC $\downarrow$ & PPL $\downarrow$ & PLL $\downarrow$ & PPL $\downarrow$ & PLL(LTD) $\downarrow$ & ACC $\uparrow$ & ACC $\uparrow$ & ACC $\uparrow$ \\
  \midrule
  \multicolumn{2}{c}{117m} & 23.52 & 29.16 & 1.21 & 63.97 & 18.66 & 43.08 & 24.99 & 78.84 & 72.40 & 22.35 \\
  \multicolumn{2}{c}{345m} & 17.99 & 21.08 & 1.10 & 52.75 & 13.74 & 33.66 & 12.66 & 82.88 & 75.36 & 25.00 \\
  \multicolumn{2}{c}{762m} & 15.84 & 18.15 & 1.05 & 45.67 & 13.20 & 30.61 & 9.33 & 84.68 & 76.32 & 24.40 \\
  \multicolumn{2}{c}{1.5b} & 14.80 & 16.51 & 1.02 & 43.81 & 12.15 & 29.96 & 7.80 & 86.92 & 78.92 & 26.62 \\
  \midrule
  \multicolumn{12}{c}{\textbf{Improvement of the evaluation metric relative to the best performing single model (\%)}} \\
  \midrule
  345m & 117m & 1.15\% & 0.62\% & 0.32\% & 2.16\% & 1.11\% & 1.63\% & 0.28\% & 0.24\% & 2.34\% & 1.71\% \\
  \midrule
  762m & 117m & 0.77\% & 0.42\% & 0.20\% & 1.15\% & 4.68\% & 1.81\% & 0.01\% & 0.43\% & 2.46\% & 1.40\% \\
  762m & 345m & 2.73\% & 2.17\% & 1.00\% & 2.26\% & 9.51\% & 4.13\% & 1.07\% & 0.85\% & 1.52\% & 3.41\% \\
  \midrule
  1.5b & 117m & 0.82\% & 0.50\% & 0.27\% & 1.52\% & 4.52\% & 2.36\% & 0.12\% & 0.37\% & 1.57\% & 1.28\% \\
  1.5b & 345m & 2.01\% & 1.47\% & 0.72\% & 2.16\% & 8.41\% & 4.92\% & 0.54\% & 0.18\% & 0.66\% & 2.88\% \\
  1.5b & 762m & 3.85\% & 3.24\% & 1.38\% & 4.72\% & 4.97\% & 6.99\% & 3.31\% & 0.18\% & 0.61\% & 1.92\% \\
  \bottomrule
\end{tabular}
}
\end{table}
\vspace{-6pt}

\subsection{Ablation Experiments with Custom-Built Models Based on Mainstream Architectures}

In this section, we use the vocabularies corresponding to Pythia, RWKV, Mamba, and GPT-2 to verify the universality and flexibility of our proposed method. Without incorporating any specific design elements, we employed a simple Transformer architecture to train a reference model with only approximately 180M parameters. In practice, the standalone performance of this model is not particularly strong.

We better demonstrate the regularities of ensemble theory under the AR framework through the design of experiments. In WikiText103 and LAMBADA datasets, we performed predictions using only the model trained on WikiText2, this is to illustrate that fine-grained segmentation under the AR framework allows even poorly performing models to have a positive influence on better models rather than a negative one. This was done to confirm that the performance improvement achieved by the AR framework is not merely due to having "seen the data". Furthermore, to verify the effectiveness of our method, we trained models on the 1BW and text8 datasets using their respective training sets to achieve substantial performance gains. Training different models (with poor, moderate, and good performance) based on different datasets aims to better observe the relative overfitting and performance of the AR framework under various circumstances. In addition, we will present the results based on the same model in Appendix D.2.

The results in Table 2 indicate that, through appropriate segmentation, even subpar models can still exert a positive influence. Average models are capable of significantly enhancing the performance of the primary model, while the positive impact generated by superior models is quite remarkable.

\begin{table}
\caption{All models used the simplest Transformer architecture. For WikiText2, WikiText103, and LAMBADA results, models were trained on the WikiText2 training set, while for 1BW and text8 results, models were trained on the 1BW and text8 training sets, respectively.}
\label{tab2}
\centering
\begin{tabular}{lccccc}
  \toprule
  \multicolumn{1}{c}{Model} & WikiText2 & WikiText103 & LAMBADA & text8 & 1BW \\
  \multicolumn{1}{c}{Trained Model} & PPL $\downarrow$ & PPL $\downarrow$ & PPL $\downarrow$ & BPC $\downarrow$ & PPL $\downarrow$ \\
  \midrule
  GPT2-Train & 91 & 322 & 9084 & 1.03 & 131 \\
  Other-Train & 94 & 314 & 12084 & 1.10 & 45 \\
  \midrule
  \multicolumn{6}{c}{\textbf{Improvement of the evaluation metric relative to the best performing single model (\%)}} \\
  \midrule
  GPT2-1.5b & 9.46\% & 6.33\% & 1.02\% & 26.12\% & 22.40\% \\
  Pythia-2.8b & 3.76\% & 3.09\% & 0.10\% & 19.61\% & 47.74\% \\
  Mamba-2.8b & 4.26\% & 3.56\% & 0.10\% & 18.68\% & 45.71\% \\
  RWKV-1.5b & 4.64\% & 3.67\% & 0.08\% & 20.53\% & 49.02\% \\
  \bottomrule
\end{tabular}
\end{table}

\begin{figure}[t]
    \centering
    \includegraphics[width=\textwidth]{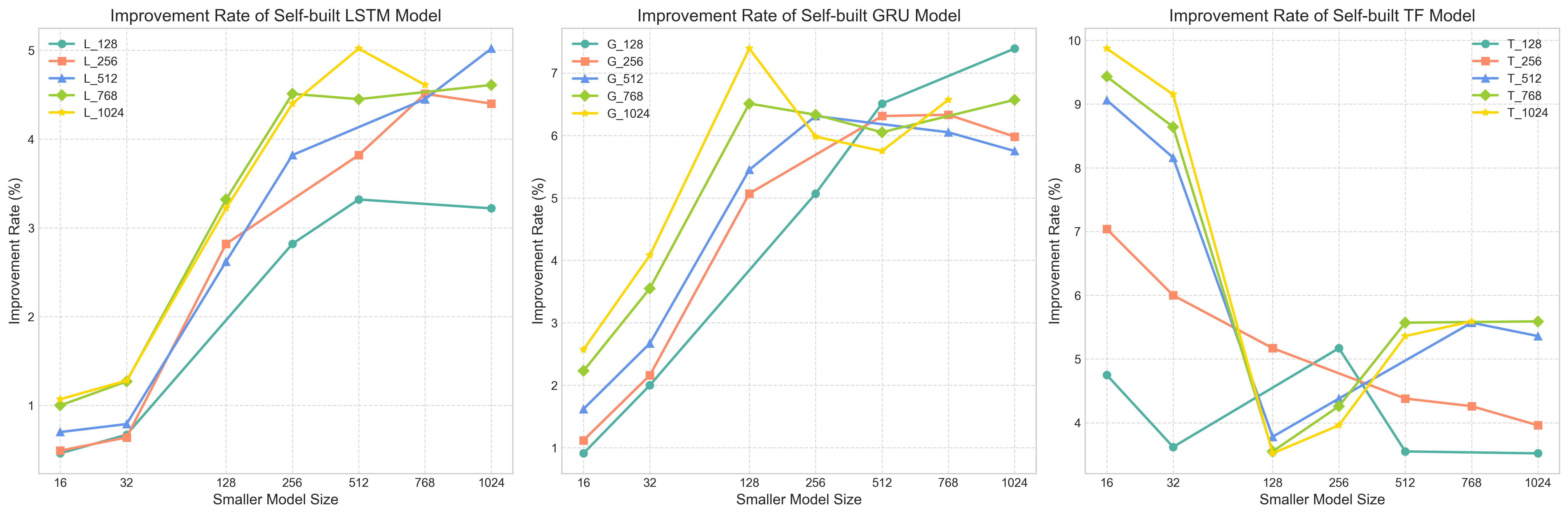}
    \vspace{-0.5cm} 
    \caption{Test results of AR framework based on simple LSTM, Transformer and GRU architectures under different embedding dimensions of SLM and LLM.}
    \label{p2}
\end{figure}
\vspace{-6pt}

\subsection{Self-built model testing based on mainstream architectures}

This paper conducted tests based on the current mainstream model architectures LSTM \cite{29, 30}, Transformer \cite{31}, and GRU \cite{32}, utilizing the same vocabulary but varying embedding dimensions. In practice, we used very simple architectures \cite{49, 50, 51, 52} without any specific modifications to the model design. The ensemble theory we proposed is still valid (as illustrated in \autoref{p2}).

\subsection{Experiments in the rest of the machine learning field}

Based on the principles of relative overfitting and the AR framework, we employed a similar methodology to conduct experiments with our approach across a broader range of machine learning domains, including CV and AI for Science, thus validating the general effectiveness of our proposed framework. We conducted experiments using different sizes of ResNet\cite{64} models and ESM\cite{65} models on image classification and protein sequence prediction tasks (in the Appendix \ref{C}).

\subsection{AR law}
In order to illustrate the universality of the ensemble theory we have proposed, this section presents a portion of the results from over ten thousand experiments (as illustrated in \autoref{p3}), while another portion is allocated for more detailed weight searches. The remaining results of the organized section can be seen in the appendix.
\begin{figure}[t]
    \centering
    \includegraphics[width=\textwidth]{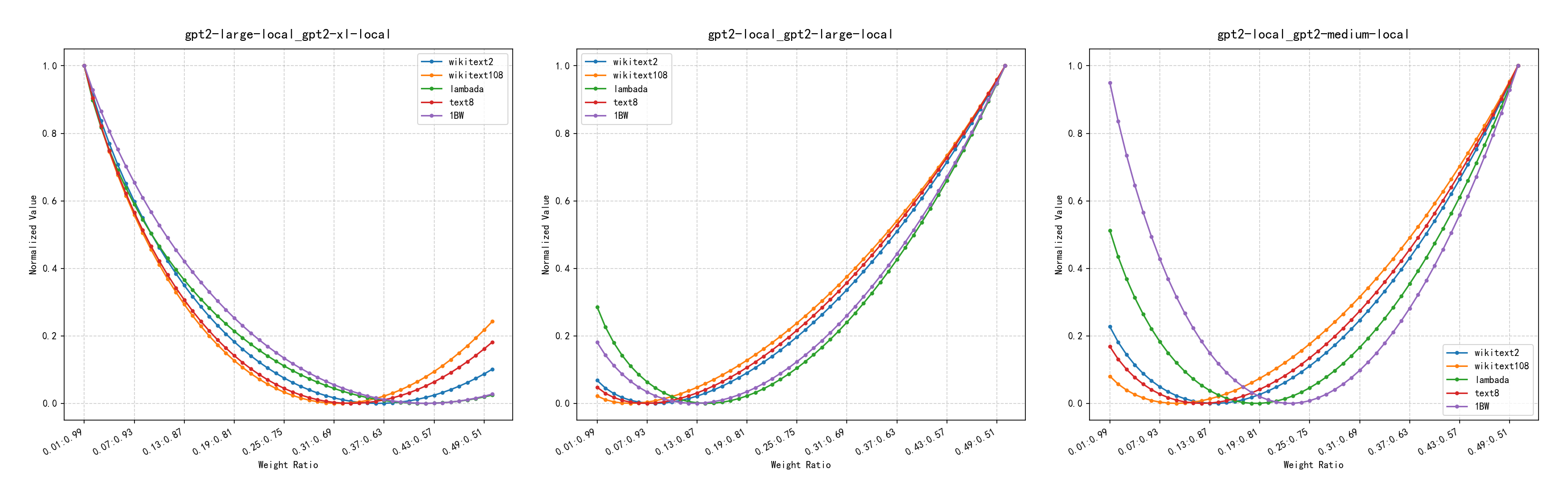}
    \vspace{-0.5cm} 
    \caption{AR law in the GPT2 model combination.}
    \label{p3}
\end{figure}

\section{Discussion and Future work}

We discuss limitations and future directions.

\begin{itemize}
\item \textbf{Model Selection Trade-offs:} For homogeneous scaling models, while performance showed stability (no significant fluctuations) as the main model grew, the magnitude of improvement decreased as the parameter ratio expanded. Larger ratios required more extreme leverage, reducing the effectiveness of SLM and potentially limiting resource savings. In specific domains, nonhomogeneous models are a better choice.
\item \textbf{Better Improvement Magnitude:} Although this paper proposes the AR law based on relative overfitting and demonstrates its universal, stable, and practical improvements along with good flexibility under a suitable threshold point, significant room for progress remains beyond the decent performance improvements already achieved. To reveal fundamental principles, our method achieved these results despite lacking any specific model design, highlighting the potential yet to be unlocked. Therefore, with a deeper understanding of the entire system and relative overfitting, key research questions worth exploring include different combination methods based on the AR law, the selection and innovation of low-parameter reference model architectures serving as backbones, and how to maintain the information purity of the SLM better. We anticipate that addressing these questions will lead to greater magnitudes of improvement.
\item \textbf{Limitations of the AR law:} As noted in this paper, the AR framework involves complex relative overfitting. Although we have proposed fundamental principles applicable in most scenarios, the inherent black-box nature and uncertainty of neural networks mean that minor fluctuations can occur within the general direction established by the AR Law. Consequently, we believe that the complex influence mechanisms of relative overfitting warrant further in-depth research.
\item \textbf{Broader Applicability:} Although this paper focuses mainly on the NLP domain and has also conducted some experiments in CV and AI for science, a broader discussion in more fields was not feasible due to limited time and resources. The applicability of the relative overfitting phenomenon and the proposed AR framework in a broader range of domains warrants further exploration.

\end{itemize}

\clearpage
\newpage
{\small
\nocite{*}
\bibliographystyle{unsrt} 
\bibliography{references}

\begin{thebibliography}{10}

\bibitem{1}
Jared Kaplan, Sam McCandlish, Tom Henighan, Tom~B Brown, Benjamin Chess, Rewon Child, Scott Gray, Alec Radford, Jeffrey Wu, and Dario Amodei.
\newblock Scaling laws for neural language models.
\newblock {\em arXiv preprint arXiv:2001.08361}, 2020.

\bibitem{2}
Niklas Muennighoff, Alexander~M Rush, Boaz Barak, Teven Le~Scao, Aleksandra Piktus, Nouamane Tazi, Sampo Pyysalo, Thomas Wolf, and Colin Raffel.
\newblock Scaling data-constrained language models.
\newblock {\em arXiv preprint arXiv:2305.16264}, 2023.

\bibitem{3}
Jason Wei, Yi~Tay, Rishi Bommasani, Colin Raffel, Barret Zoph, Sebastian Borgeaud, Dani Yogatama, Maarten Bosma, Denny Zhou, Donald Metzler, Ed~Chi, Tatsunori Hashimoto, Percy Liang, Jeff Dean, and William Fedus.
\newblock Emergent abilities of large language models.
\newblock {\em arXiv preprint arXiv:2206.07682}, 2022.

\bibitem{4}
Tanishq Kumar, Zachary Ankner, Benjamin~F. Spector, Blake Bordelon, Niklas Muennighoff, Mansheej Paul, Cengiz Pehlevan, Christopher R\'{e}, and Aditi Raghunathan.
\newblock Scaling laws for precision.
\newblock {\em arXiv preprint}, Nov 2024.
\newblock last revised 30 Nov 2024.

\bibitem{5}
Guhao Feng, Kai Yang, Yuntian Gu, Xinyue Ai, Shengjie Luo, Jiacheng Sun, Di~He, Zhenguo Li, and Liwei Wang.
\newblock How numerical precision affects mathematical reasoning capabilities of llms.
\newblock {\em arXiv preprint}, Oct 2024.

\bibitem{13}
William~B Johnson and Joram Lindenstrauss.
\newblock Extensions of lipschitz mappings into a hilbert space.
\newblock In {\em Contemporary Mathematics}, volume~26, pages 189--206. American Mathematical Society, Providence, Rhode Island, 1984.

\bibitem{12}
Zhenzhong Lan, Mingda Chen, Sebastian Goodman, Kevin Gimpel, Piyush Sharma, and Radu Soricut.
\newblock Albert: A lite bert for self-supervised learning of language representations.
\newblock {\em arXiv preprint arXiv:1909.11942}, 2020.

\bibitem{24}
Leo Breiman.
\newblock Bagging predictors.
\newblock {\em Machine Learning}, 24(2):123--140, August 1996.

\bibitem{25}
Zhijun Chen, Jingzheng Li, Pengpeng Chen, Zhuoran Li, Kai Sun, Yuankai Luo, Qianren Mao, Dingqi Yang, Hailong Sun, and Philip~S. Yu.
\newblock Harnessing multiple large language models: A survey on llm ensemble.
\newblock {\em arXiv preprint}, Feb 2025.
\newblock 9 pages, 2 figures, last revised 2 Mar 2025.

\bibitem{26}
Rachel Wicks, Kartik Ravisankar, Xinchen Yang, Philipp Koehn, and Matt Post.
\newblock Token-level ensembling of models with different vocabularies.
\newblock {\em arXiv preprint}, Feb 2025.
\newblock Under review.

\bibitem{53}
Shaojie Jiang, Pengjie Ren, Christof Monz, and Maarten de~Rijke.
\newblock Improving neural response diversity with frequency-aware cross-entropy loss.
\newblock In {\em Proceedings of The Web Conference 2019 (WWW '19)}, pages 2879--2885. ACM, 2019.

\bibitem{54}
Ari~S. Benjamin, Ling-Qi Zhang, Cheng Qiu, Alan~A. Stocker, and Konrad~P. Kording.
\newblock Efficient neural codes naturally emerge through gradient descent learning.
\newblock {\em Nature Communications}, 13(1):7972, Dec 2022.

\bibitem{55}
Shuhao Gu, Jinchao Zhang, Fandong Meng, Yang Feng, Wanying Xie, Jie Zhou, and Dong Yu.
\newblock Token-level adaptive training for neural machine translation.
\newblock In {\em Proceedings of the 2020 Conference on Empirical Methods in Natural Language Processing (EMNLP)}, pages 1035--1046, Online, November 2020. Association for Computational Linguistics.

\bibitem{56}
Benjamin LeBrun, Alessandro Sordoni, and Timothy~J O'Donnell.
\newblock Evaluating distributional distortion in neural language modeling.
\newblock In {\em International Conference on Learning Representations (ICLR)}, 2022.

\bibitem{57}
Richard Diehl~Martinez, Z{\'e}bulon Goriely, Andrew Caines, Paula Buttery, and Lisa Beinborn.
\newblock Mitigating frequency bias and anisotropy in language model pre-training with syntactic smoothing.
\newblock In {\em Proceedings of the 2024 Conference on Empirical Methods in Natural Language Processing}, pages 5999--6011, Miami, Florida, USA, November 2024. Association for Computational Linguistics.

\bibitem{58}
Andrea Pinto, Tomer Galanti, and Randall Balestriero.
\newblock The fair language model paradox, 2024.

\bibitem{33}
Alec Radford, Jeffrey Wu, Dario Amodei, Jack Clark, Miles Brundage, and Ilya Sutskever.
\newblock Language models are unsupervised multitask learners.
\newblock Technical report, OpenAI, 2019.

\bibitem{31}
Ashish Vaswani, Noam Shazeer, Niki Parmar, Jakob Uszkoreit, Llion Jones, Aidan~N Gomez, {\L}ukasz Kaiser, and Illia Polosukhin.
\newblock Attention is all you need.
\newblock In {\em Advances in Neural Information Processing Systems}, volume~30. Curran Associates, Inc., 2017.

\bibitem{29}
Sepp Hochreiter and J{\"u}rgen Schmidhuber.
\newblock Long short-term memory.
\newblock {\em Neural Computation}, 9(8):1735--1780, 1997.

\bibitem{30}
Antonio Orvieto, Samuel~L Smith, Albert Gu, Anushan Fernando, Caglar Gulcehre, Razvan Pascanu, and Soham De.
\newblock Resurrecting recurrent neural networks for long sequences.
\newblock {\em arXiv preprint arXiv:2303.06349}, 2023.

\bibitem{32}
Kyunghyun Cho, Bart van Merri{\"e}nboer, Caglar Gulcehre, Dzmitry Bahdanau, Fethi Bougares, Holger Schwenk, and Yoshua Bengio.
\newblock Learning phrase representations using {RNN} encoder-decoder for statistical machine translation.
\newblock In {\em Proceedings of the 2014 Conference on Empirical Methods in Natural Language Processing ({EMNLP})}, pages 1724--1734, Doha, Qatar, October 2014. Association for Computational Linguistics.

\bibitem{34}
Stephen Merity, Caiming Xiong, James Bradbury, and Richard Socher.
\newblock Pointer sentinel mixture models.
\newblock In {\em International Conference on Learning Representations}, 2017.

\bibitem{14}
Xin Rong.
\newblock Word2vec parameter learning explained.
\newblock {\em arXiv preprint arXiv:1411.2738}, 2014.

\bibitem{39}
Leo Gao, Stella Biderman, Sid Black, Laurence Golding, Travis Hoppe, Charles Foster, Jason Phang, Horace He, Anish Thite, Noa Nabeshima, et~al.
\newblock The {P}ile: An 800{GB} dataset of diverse text for language modeling.
\newblock {\em arXiv preprint arXiv:2101.00027}, 2020.

\bibitem{40}
Stella Biderman, Kieran Bicheno, and Leo Gao.
\newblock Datasheet for the pile.
\newblock {\em arXiv preprint arXiv:2201.07311}, 2022.

\bibitem{38}
Ciprian Chelba, Tomas Mikolov, Mike Schuster, Qi~Ge, Thorsten Brants, Phillipp Koehn, and Tony Robinson.
\newblock One billion word benchmark for measuring progress in statistical language modeling.
\newblock {\em arXiv preprint arXiv:1312.3005}, 2014.

\bibitem{35}
Denis Paperno, Germ{\'a}n Kruszewski, Angeliki Lazaridou, Ngoc~Quan Pham, Raffaella Bernardi, Sandro Pezzelle, Marco Baroni, Gemma Boleda, and Raquel Fern{\'a}ndez.
\newblock The lambada dataset: Word prediction requiring a broad discourse context.
\newblock In {\em Proceedings of the 54th Annual Meeting of the Association for Computational Linguistics}, volume~1, pages 1525--1534. Association for Computational Linguistics, 2016.

\bibitem{36}
Felix Hill, Antoine Bordes, Sumit Chopra, and Jason Weston.
\newblock The goldilocks principle: Reading children's books with explicit memory representations, 2016.

\bibitem{37}
Peter Clark, Isaac Cowhey, Oren Etzioni, Tushar Khot, Ashish Sabharwal, Carissa Schoenick, and Oyvind Tafjord.
\newblock Think you have solved question answering? try arc, the ai2 reasoning challenge.
\newblock {\em arXiv:1803.05457v1}, 2018.

\bibitem{41}
Stella Biderman, Hailey Schoelkopf, Quentin~G Anthony, Herbie Bradley, Kyle O'Brien, Eric Hallahan, Mohammad~Aflah Purohit, USA~Sharath Prashanth, Edward Raff, and Davis Burkett.
\newblock Pythia: A suite for analyzing large language models across training and scaling.
\newblock In {\em Proceedings of the International Conference on Machine Learning}, pages 2397--2430. PMLR, 2023.

\bibitem{42}
Bo~Peng, Eric Alcaide, Quentin Anthony, Aaron Albalak, Samuel Arcadinho, Stella Biderman, Huanqi Cao, Xin Cheng, Michael Chung, Leon Derczynski, Xinyu Du, Mateo Grella, Kranthi GV, Xuzheng He, Haowen Hou, Przemys{\l}aw Kazienko, Jan Koco{\'n}, Jiaming Kong, Bartosz Koptyra, Hayden Lau, Kshitij Shankar~Inamdar Mantri, Felix Mom, Alan Saito, Guangyu Song, Xingbin Tang, Jiaqi Wind, Stanis{\l}aw Wo{\'z}niak, Zhenyuan Zhang, Qian Zhou, Jian Zhu, and Rui-Jie Zhu.
\newblock Rwkv: Reinventing rnns for the transformer era.
\newblock In {\em Findings of the Association for Computational Linguistics: EMNLP 2023}, pages 14048--14077, 2023.

\bibitem{43}
Albert Gu and Tri Dao.
\newblock Mamba: Linear-time sequence modeling with selective state spaces.
\newblock In {\em Conference on Learning and Optimization}, 2024.

\bibitem{44}
Qwen Team.
\newblock Qwen2.5: A party of foundation models, September 2024.

\bibitem{45}
An~Yang, Baosong Yang, Binyuan Hui, Bo~Zheng, Bowen Yu, Chang Zhou, Chengpeng Li, Chengyuan Li, Dayiheng Liu, Fei Huang, Guanting Dong, Haoran Wei, Huan Lin, Jialong Tang, Jialin Wang, Jian Yang, Jianhong Tu, Jianwei Zhang, Jianxin Ma, Jin Xu, Jingren Zhou, Jinze Bai, Jinzheng He, Junyang Lin, Kai Dang, Keming Lu, Keqin Chen, Kexin Yang, Mei Li, Mingfeng Xue, Na~Ni, Pei Zhang, Peng Wang, Ru~Peng, Rui Men, Ruize Gao, Runji Lin, Shijie Wang, Shuai Bai, Sinan Tan, Tianhang Zhu, Tianhao Li, Tianyu Liu, Wenbin Ge, Xiaodong Deng, Xiaohuan Zhou, Xingzhang Ren, Xinyu Zhang, Xipin Wei, Xuancheng Ren, Yang Fan, Yang Yao, Yichang Zhang, Yu~Wan, Yunfei Chu, Yuqiong Liu, Zeyu Cui, Zhenru Zhang, and Zhihao Fan.
\newblock Qwen2 technical report.
\newblock {\em arXiv preprint arXiv:2407.10671}, 2024.

\bibitem{49}
Nitish Srivastava, Geoffrey~E Hinton, Alex Krizhevsky, Ilya Sutskever, and Ruslan Salakhutdinov.
\newblock Dropout: A simple way to prevent neural networks from overfitting.
\newblock {\em Journal of Machine Learning Research}, 15(1):1929--1958, 2014.

\bibitem{50}
David~E. Rumelhart, Geoffrey~E. Hinton, and Ronald~J. Williams.
\newblock Learning representations by back-propagating errors.
\newblock {\em Nature}, 323:533--536, 1986.

\bibitem{51}
Adam Paszke, Sam Gross, Francisco Massa, Adam Lerer, James Bradbury, Gregory Chanan, Trevor Killeen, Zeming Lin, Natalia Gimelshein, Luca Antiga, Alban Desmaison, Andreas Kopf, Edward Yang, Zachary DeVito, Martin Raison, Alykhan Tejani, Sasank Chilamkurthy, Benoit Steiner, Lu~Fang, Junjie Bai, and Soumith Chintala.
\newblock Pytorch: An imperative style, high-performance deep learning library.
\newblock In {\em Advances in Neural Information Processing Systems}, volume~32, 2019.

\bibitem{52}
Diederik~P Kingma and Jimmy Ba.
\newblock Adam: A method for stochastic optimization.
\newblock In {\em International Conference on Learning Representations}, 2015.

\bibitem{64}
Kaiming He, Xiangyu Zhang, Shaoqing Ren, and Jian Sun.
\newblock Deep residual learning for image recognition.
\newblock In {\em Proceedings of the IEEE Conference on Computer Vision and Pattern Recognition (CVPR)}, pages 770--778, 2016.

\bibitem{65}
Zeming Lin, Halil Akin, Roshan Rao, Brian Hie, Zhongkai Zhu, Wenting Lu, Nikita Smetanin, Robert Verkuil, Ori Kabeli, Alexander Rives, et~al.
\newblock Evolutionary-scale prediction of atomic-level protein structure with a language model.
\newblock {\em Science}, 379(6637):1123--1130, mar 2023.

\bibitem{6}
Yuanhao Yue, Chengyu Wang, Jun Huang, and Peng Wang.
\newblock Distilling instruction-following abilities of large language models with task-aware curriculum planning.
\newblock In {\em Findings of the Association for Computational Linguistics: EMNLP 2024}, pages 6030--6054, Miami, Florida, USA, 2024.

\bibitem{7}
Yixing Li, Yuxian Gu, Li~Dong, Dequan Wang, Yu~Cheng, and Furu Wei.
\newblock Direct preference knowledge distillation for large language models.
\newblock {\em arXiv preprint arXiv:2406.19774}, 2024.

\bibitem{8}
Cheng-Yu Hsieh, Chun-Liang Li, Chih-kuan Yeh, Hootan Nakhost, Yasuhisa Fujii, Alex Ratner, Ranjay Krishna, Chen-Yu Lee, and Tomas Pfister.
\newblock Distilling step-by-step! outperforming larger language models with less training data and smaller model sizes.
\newblock In {\em Findings of the Association for Computational Linguistics: ACL 2023}, pages 8003--8017, Toronto, Canada, 2023.

\bibitem{9}
Geoffrey Hinton, Oriol Vinyals, and Jeff Dean.
\newblock Distilling the knowledge in a neural network.
\newblock In {\em NIPS Deep Learning Workshop}, 2015.

\bibitem{10}
DeepSeek-AI, Aixin Liu, Bei Feng, Bing Xue, Bingxuan Wang, Bochao Wu, Chengda Lu, Chenggang Zhao, Chengqi Deng, Chenyu Zhang, Chong Ruan, Damai Dai, Daya Guo, Dejian Yang, Deli Chen, Dongjie Ji, Erhang Li, Fangyun Lin, Fucong Dai, Fuli Luo, Guangbo Hao, Guanting Chen, Guowei Li, H.~Zhang, Han Bao, Hanwei Xu, Haocheng Wang, Haowei Zhang, Honghui Ding, Huajian Xin, Huazuo Gao, Hui Li, Hui Qu, J.L. Cai, Jian Liang, Jianzhong Guo, Jiaqi Ni, Jiashi Li, Jiawei Wang, Jin Chen, Jingchang Chen, Jingyang Yuan, Junjie Qiu, Junlong Li, Junxiao Song, Kai Dong, Kaige Hu, Kang Gao, Kexin Guan, Kuai Huang, Lean Yu, Lecong Wang, Lei Zhang, Leyi Xu, Liang Xia, Litong Zhao, Liyue Wang, Liyue Zhang, Meng Li, Miaojun Wang, Mingchuan Zhang, Minghua Zhang, Mingming Tang, Ming Li, Ning Tian, Panpan Huang, Peiyi Wang, Peng Zhang, Qiancheng Wang, Qihao Zhu, Qinyu Chen, R.J. Du, R.L. Chen, Ruidi Jin, Ruiqing Ge, Ruizhe Zhang, Ruizhe Pan, Runji Wang, Ruixin Xu, Ruoyu Zhang, Ryu Chen, S.S. Li, Shanghao Lu, Shangyan Zhou, Shanhuang Chen,
  Shaoqing Wu, Shengfeng Ye, Shengfeng Ye, Shirong Ma, Shiyu Wang, Shuang Zhou, Shuiping Yu, Shunfeng Zhou, Shuting Pan, T.~Wang, Tao Yun, Tian Pei, Tianyu Sun, W.L. Xiao, Wangding Zeng, and et~al.
\newblock Deepseek-v3 technical report.
\newblock {\em arXiv preprint}, Dec 2024.
\newblock last revised 18 Feb 2025.

\bibitem{11}
Shuai Bai, Keqin Chen, Xuejing Liu, Jialin Wang, Wenbin Ge, Sibo Song, Kai Dang, Peng Wang, Shijie Wang, Jun Tang, Humen Zhong, Yuanzhi Zhu, Mingkun Yang, Zhaohai Li, Jianqiang Wan, Pengfei Wang, Wei Ding, Zheren Fu, Yiheng Xu, Jiabo Ye, Xi~Zhang, Tianbao Xie, Zesen Cheng, Hang Zhang, Zhibo Yang, Haiyang Xu, and Junyang Lin.
\newblock Qwen2.5-vl technical report.
\newblock {\em arXiv preprint}, Feb 2025.

\bibitem{15}
John~A Alexander and Michael~C Mozer.
\newblock Template-based algorithms for connectionist rule extraction.
\newblock In {\em Advances in Neural Information Processing Systems}, volume~7, pages 609--616. MIT Press, 1995.

\bibitem{16}
Minh-Thang Luong, Ilya Sutskever, Oriol Vinyals, {\L}ukasz Kaiser, and Quoc~V Le.
\newblock Multi-task sequence to sequence learning.
\newblock {\em arXiv preprint arXiv:1511.06114}, 2015.

\bibitem{17}
Ofir Press and Lior Wolf.
\newblock Using the output embedding to improve language models.
\newblock {\em arXiv preprint arXiv:1608.05859}, 2016.

\bibitem{18}
Ilya Sutskever, Oriol Vinyals, and Quoc~V Le.
\newblock Sequence to sequence learning with neural networks.
\newblock In {\em Advances in Neural Information Processing Systems}, pages 3104--3112, 2014.

\bibitem{19}
Yoshua Bengio, R{\'e}jean Ducharme, Pascal Vincent, and Christian Jauvin.
\newblock A neural probabilistic language model.
\newblock {\em Journal of Machine Learning Research}, 3:1137--1155, 2003.

\bibitem{20}
Claude~E. Shannon.
\newblock A mathematical theory of communication.
\newblock {\em The Bell System Technical Journal}, 27(3):379--423, July 1948.

\bibitem{21}
Thomas~M. Cover and Joy~A. Thomas.
\newblock {\em Elements of Information Theory}.
\newblock Wiley, Hoboken, New Jersey, 2 edition, November 2012.
\newblock E-Book.

\bibitem{22}
Naftali Tishby, Fernando~C. Pereira, and William Bialek.
\newblock The information bottleneck method.
\newblock {\em arXiv preprint}, Apr 2000.
\newblock Submitted on 24 Apr 2000.

\bibitem{23}
Ravid Shwartz-Ziv and Naftali Tishby.
\newblock Opening the black box of deep neural networks via information.
\newblock {\em arXiv preprint}, Mar 2017.
\newblock 19 pages, 8 figures, last revised 29 Apr 2017.

\bibitem{27}
Evan Hernandez and Jacob Andreas.
\newblock The low-dimensional linear geometry of contextualized word representations.
\newblock In {\em Proceedings of the 25th Conference on Computational Natural Language Learning (CoNLL)}, 2021.

\bibitem{28}
Zi~Yin and Yuanyuan Shen.
\newblock On the dimensionality of word embedding.
\newblock In {\em Advances in Neural Information Processing Systems (NeurIPS)}, volume~31, 2018.

\bibitem{46}
Jacob Devlin, Ming-Wei Chang, Kenton Lee, and Kristina Toutanova.
\newblock Bert: Pre-training of deep bidirectional transformers for language understanding.
\newblock {\em arXiv preprint arXiv:1810.04805}, 2018.

\bibitem{47}
Colin Raffel, Noam Shazeer, Adam Roberts, Katherine Lee, Sharan Narang, Michael Matena, Yanqi Zhou, Wei Li, and Peter~J. Liu.
\newblock Exploring the limits of transfer learning with a unified text-to-text transformer.
\newblock {\em Journal of Machine Learning Research}, 21(140):1--67, 2020.

\bibitem{48}
DeepSeek-AI.
\newblock Deepseek-r1: Incentivizing reasoning capability in llms via reinforcement learning, 2025.

\bibitem{59}
Olga Russakovsky, Jia Deng, Hao Su, Jonathan Krause, Sanjeev Satheesh, Sean Ma, Zhiheng Huang, Andrej Karpathy, Aditya Khosla, Michael Bernstein, Alexander~C. Berg, and Li~Fei-Fei.
\newblock {ImageNet Large Scale Visual Recognition Challenge}.
\newblock {\em International Journal of Computer Vision (IJCV)}, 115(3):211--252, 2015.

\bibitem{60}
Jia Deng, Olga Russakovsky, Jonathan Krause, Michael Bernstein, Alexander~C. Berg, and Li~Fei-Fei.
\newblock {Scalable multi-label annotation}.
\newblock In {\em ACM Conference on Human Factors in Computing (CHI)}, 2014.

\bibitem{61}
Olga Russakovsky, Jia Deng, Zhiheng Huang, Alexander~C. Berg, and Li~Fei-Fei.
\newblock {Detecting avocados to zucchinis: what have we done, and where are we going?}
\newblock In {\em Proceedings of the International Conference of Computer Vision (ICCV)}, 2013.

\bibitem{62}
H.~Su, Jia Deng, and Li~Fei-Fei.
\newblock {Crowdsourcing Annotations for Visual Object Detection}.
\newblock In {\em AAAI Human Computation Workshop}, 2012.

\bibitem{63}
Jia Deng, Wei Dong, Richard Socher, Li-Jia Li, Kai Li, and Li~Fei-Fei.
\newblock {ImageNet: A Large-Scale Hierarchical Image Database}.
\newblock In {\em IEEE Computer Vision and Pattern Recognition (CVPR)}, pages 248--255, 2009.

\bibitem{66}
Baris~E. Suzek, Hongzhan Huang, Peter McGarvey, Raja Mazumder, and Cathy~H. Wu.
\newblock Uniref: comprehensive and non-redundant uniprot reference clusters.
\newblock {\em Bioinformatics}, 23(10):1282--1288, May 2007.

\bibitem{67}
Alex Krizhevsky and Geoffrey Hinton.
\newblock Learning multiple layers of features from tiny images.
\newblock Technical report, University of Toronto, 2009.
\newblock Technical Report.

\bibitem{68}
Dan Hendrycks and Kevin Gimpel.
\newblock A baseline for detecting misclassified and out-of-distribution examples in neural networks.
\newblock In {\em International Conference on Learning Representations (ICLR)}, 2017.

\bibitem{69}
Daniel Sikar, Artur~d'Avila Garcez, and Tillman Weyde.
\newblock Explorations of the softmax space: Knowing when the neural network doesn't know.
\newblock {\em arXiv preprint arXiv:2502.00456}, 2025.

\bibitem{70}
Charles Corbi{\`e}re, Nicolas Thome, Avner Bar-Hen, Matthieu Cord, and Patrick P{\'e}rez.
\newblock Addressing failure prediction by learning model confidence.
\newblock In {\em Advances in Neural Information Processing Systems 32 (NeurIPS 2019)}, pages 2902--2913, 2019.

\end{thebibliography}
}

\clearpage
\newpage
\appendix
\renewcommand{\thesubsection}{\Alph{subsection}} 
\section*{Appendix}
\subsection{Proof of Theorem}
\label{A}
\subsubsection{Definitions}
\label{A.1}
\begin{definition}
Let $q$ and $q'$ be two models sharing the same input and output spaces. If model $q$ has a larger number of parameters (or higher model complexity) than model $q'$, we define $q$ as the Large Model (LM) and $q'$ as the Small Model (SM).
\end{definition}

\begin{definition}
Let $S$ be the set of all samples (or positions) to be predicted in a given task, its size being $N = |S|$. Let $s \in S$ represent a specific sample. Let $V = \{1, 2, 3, \dots, |V|\}$ be the set of indices for all possible output classes (dimensions), where the output dimensions of the two models correspond one-to-one.
\end{definition}

\begin{definition}
Let $k_s \in V$ be the index of the true class for sample $s$. Let $q_s$ be the predicted output vector of model $q$ for sample $s$.
\end{definition}

\begin{definition}
Let $q_{s,i}$ be the probability assigned by model $q$ to sample $s$ belonging to class $i$. Similarly, $q'_{s,i}$ is defined for model $q'$.
\end{definition}

\begin{definition}
Model $q$ predicts sample $s$ correctly if and only if $\arg\max_{i \in V}(q_{s,i}) = k_s$.
\end{definition}

\begin{definition}
The entire set of samples in the task is partitioned into three sets: $T$, $F$, and $N$.
\begin{itemize}
    \item The set $T$ consists of samples where the LM predicts incorrectly, but the SM predicts correctly:
    $$T = \{s \in S \mid \arg\max_{i \in V}(q_{s,i}) \neq k_s \text{ and } \arg\max_{i \in V}(q'_{s,i}) = k_s\}$$
    \item The set $F$ consists of samples where the LM predicts correctly, but the SM predicts incorrectly:
    $$F = \{s \in S \mid \arg\max_{i \in V}(q_{s,i}) = k_s \text{ and } \arg\max_{i \in V}(q'_{s,i}) \neq k_s\}$$
    \item The set $N$ consists of samples where both LM and SM predict correctly, or both predict incorrectly.
\end{itemize}
We have $T \cup F \cup N = S$, and $T, F, N$ are mutually disjoint.
\end{definition}

\begin{definition}
The accuracy (ACC) of a model on a prediction task $S$ is defined as (where $I(\cdot)$ is the indicator function):
$$ACC_q = \frac{1}{|S|} \sum_{s \in S} I(\arg\max_{i \in V}(q_{s,i}) = k_s)$$
\end{definition}

\subsubsection{Assumptions and Core Theorems}
\label{A.2}
\begin{assumption}\label{ass:1}
The improvements of larger models are primarily concentrated in the tail of the distribution. Assume there exists a high-frequency output class $h \in V$. Let $S_h^q = \{s \in S \mid \arg\max_i q_{s,i} = h\}$ be the set of samples predicted as $h$ by LM, with size $n = |S_h^q|$. Let $S_h^{q'} = \{s \in S \mid \arg\max_i q'_{s,i} = h\}$ be the set of samples predicted as $h$ by SM, with size $m = |S_h^{q'}|$.

We assume the following conditions hold\cite{53,54,55, 56, 57, 58}:
\begin{enumerate}
    \item The SM outputs class $h$ more frequently than the LM, i.e., $n < m$.
    \item The advantage of LM's prediction accuracy for class $h$, $P_L(h) = P(k_s = h \mid s \in S_h^q)$, over SM's accuracy, $P_S(h) = P(k_s = h \mid s \in S_h^{q'})$, is limited. Specifically, this advantage satisfies the relation:
    $$|P_L(h) - P_S(h)| < \frac{(m-n)P_S(h)}{n}$$
\end{enumerate}
\end{assumption}

\begin{theorem}[Mainstay Deviation]\label{thm:mainstay_deviation}
Under Assumption \ref{ass:1}, for class $h$, the total number of correct predictions by SM is greater than that by LM.
\end{theorem}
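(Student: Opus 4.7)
The plan is to rearrange the bound in Assumption \ref{ass:1}(2) so that it becomes exactly the inequality $n\cdot P_L(h) < m\cdot P_S(h)$ that we need, using $n<m$ from Assumption \ref{ass:1}(1) to handle the two sign cases of $P_L(h)-P_S(h)$ separately.

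First I would unfold the definitions: the number of correct predictions LM makes on class $h$ is $N_L^h = n\,P_L(h)$, and for SM it is $N_S^h = m\,P_S(h)$, so the goal reduces to proving
\begin{equation*}
m\,P_S(h) - n\,P_L(h) > 0.
\end{equation*}
I would then split on the sign of $P_L(h)-P_S(h)$. In the case $P_L(h) \ge P_S(h)$, the absolute-value bound gives $P_L(h)-P_S(h) < (m-n)P_S(h)/n$; multiplying both sides by $n>0$ yields $n\,P_L(h) - n\,P_S(h) < (m-n)\,P_S(h) = m\,P_S(h) - n\,P_S(h)$, and cancelling $n\,P_S(h)$ on both sides delivers $n\,P_L(h) < m\,P_S(h)$, as desired.

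In the easier case $P_L(h) < P_S(h)$, I would not even need the quantitative bound: combined with $n<m$ (and the fact that both precisions are nonnegative), termwise multiplication directly gives $n\,P_L(h) < m\,P_S(h)$. This case can be dispatched in one line. Putting the two cases together then closes the theorem.

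I do not anticipate a real obstacle here; the theorem is essentially a clean algebraic consequence of how the bound in Assumption \ref{ass:1}(2) was stated. The only point worth being careful about is that the assumption uses the two-sided $|P_L(h)-P_S(h)|$, which forces the case split, and that one should explicitly note $n>0$ and $P_S(h)\ge 0$ to justify multiplying through and to handle the trivial sub-case. The conceptual content worth flagging in the writeup is interpretive rather than technical: the inequality shows that even a strictly higher per-prediction precision $P_L(h)>P_S(h)$ of the LM is not enough to overtake the SM on the absolute count of correct predictions for the mainstay class $h$, because the SM's greater propensity $m>n$ to emit $h$ more than compensates, provided the precision gap stays within the stated tolerance $(m-n)P_S(h)/n$.
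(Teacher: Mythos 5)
Your proof is correct and takes essentially the same route as the paper's: both reduce to the rearranged inequality $m\,P_S(h) - n\,P_L(h) > 0$ and drive it home via the quantitative bound in Assumption~\ref{ass:1}(2). The only cosmetic difference is that the paper avoids your case split by using $P_L(h)-P_S(h) \le |P_L(h)-P_S(h)|$ directly, which collapses both of your cases into one line.
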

\begin{proof}
The total number of correct predictions for class $h$ by LM is $N_L^h = n \cdot P_L(h)$.
The total number of correct predictions for class $h$ by SM is $N_S^h = m \cdot P_S(h)$.

We want to prove $N_L^h < N_S^h$, which is equivalent to proving $m \cdot P_S(h) - n \cdot P_L(h) > 0$.

We transform the difference:
\begin{align*}
m \cdot P_S(h) - n \cdot P_L(h) &= m \cdot P_S(h) - n \cdot P_S(h) + n \cdot P_S(h) - n \cdot P_L(h) \\
&= (m-n)P_S(h) - n(P_L(h) - P_S(h))
\end{align*}

From Assumption \ref{ass:1}, we have:
$$(P_L(h) - P_S(h)) \le |P_L(h) - P_S(h)| < \frac{(m-n)P_S(h)}{n}$$

Thus,
$$(m-n)P_S(h) - n(P_L(h) - P_S(h)) > (m-n)P_S(h) - n\left(\frac{(m-n)P_S(h)}{n}\right) = 0$$
The inequality is proven.

It can be easily shown that Theorem \ref{thm:mainstay_deviation} holds for any high-frequency output dimension satisfying Assumption \ref{ass:1}.
\end{proof}

\begin{lemma}[Exchange Condition]\label{lemma:exchange_condition}
Let the fused model be $q_{new,s,i} = w \cdot q_{s,i} + (1-w) \cdot q'_{s,i}$ for $w \in (0,1)$. On sample $s$, if the LM's original prediction is $i$ (i.e., $\arg\max(q_s) = i$), for the fused model's prediction for class $j$ to have a higher probability than for class $i$, it must satisfy:
$$w \cdot q_{s,j} + (1-w) \cdot q'_{s,j} > w \cdot q_{s,i} + (1-w) \cdot q'_{s,i}$$

For $w \in (0,1)$, this inequality is equivalent to:
$$\frac{q_{s,i} - q_{s,j}}{q'_{s,j} - q'_{s,i}} < \frac{1-w}{w}$$

Clearly, we need SM's prediction for $j$ to be greater than its prediction for $i$:
$$q'_{s,j} - q'_{s,i} > 0$$
This is termed the \textbf{exchange condition}.

Furthermore, if $\arg\max(q_{new,s}) = j$, then the fused model's prediction changes from $i$ to $j$. This is termed the \textbf{strict exchange condition} if the exchange condition is also met.

We define the left side as the Exchange Threshold, denoted $ET(s, i \to j)$. A flip occurs only when $ET < \frac{1-w}{w}$.

This lemma implies that only when the exchange condition is met can the fused model's prediction potentially differ from the LM's prediction. Any case not satisfying this condition cannot change the prediction outcome. Moreover, only when the strict exchange condition is met can the change in prediction outcome be determined.
\end{lemma}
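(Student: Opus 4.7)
The plan is to prove the equivalence by direct algebraic rearrangement, then separately justify why the sign condition on $q'_{s,j} - q'_{s,i}$ must emerge as a consequence rather than be assumed. First I would start from the defining inequality
$$w \cdot q_{s,j} + (1-w) \cdot q'_{s,j} > w \cdot q_{s,i} + (1-w) \cdot q'_{s,i}$$
and collect terms from each model on its own side, yielding
$$w(q_{s,i} - q_{s,j}) < (1-w)(q'_{s,j} - q'_{s,i}).$$
This is the pivotal reformulation, and every subsequent step is reversible so the equivalence goes both ways.

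Next I would invoke the hypothesis $\arg\max(q_s) = i$, which forces $q_{s,i} - q_{s,j} \ge 0$. Because $w \in (0,1)$ makes both $w$ and $1-w$ strictly positive, the left-hand side of the rearranged inequality is nonnegative, so the right-hand side must be strictly positive, i.e. $q'_{s,j} - q'_{s,i} > 0$. This is exactly the exchange condition, and the argument shows it is a necessary prerequisite for any potential flip, not an auxiliary assumption. Once it is established, dividing both sides by the strictly positive quantity $w(q'_{s,j} - q'_{s,i})$ preserves the inequality direction and produces
$$\frac{q_{s,i} - q_{s,j}}{q'_{s,j} - q'_{s,i}} < \frac{1-w}{w},$$
which is the stated form. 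Running the manipulation backward (multiply through by $w(q'_{s,j}-q'_{s,i})>0$, then expand) recovers the original inequality, so the equivalence is genuine.

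Finally, to close out the strict exchange portion, I would note that $\arg\max(q_{new,s}) = j$ means $q_{new,s,j} \ge q_{new,s,\ell}$ for every class $\ell \in V$; specializing to $\ell = i$ recovers exactly the exchange inequality already analyzed, so a flip from $i$ to $j$ forces the strict exchange condition between $i$ and $j$, as claimed. I would also remark that the strict exchange condition is necessary but not sufficient for the flip, since some third class $\ell \neq i,j$ could still dominate $j$ in the fused model, which is consistent with the lemma's wording that the strict exchange condition lets us \emph{determine} an outcome change rather than unilaterally cause it.

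The main obstacle is purely a matter of careful case handling around signs: the tie case $q_{s,i}=q_{s,j}$ is degenerate but still covered, since the rearranged inequality then collapses to $0 < (1-w)(q'_{s,j} - q'_{s,i})$, which again yields $q'_{s,j} > q'_{s,i}$, and the quotient in the final form becomes $0$, trivially less than $(1-w)/w$. Aside from ensuring that the division step is justified by strict positivity of the denominator (rather than assumed a priori), the proof is routine algebra with no deeper machinery required.
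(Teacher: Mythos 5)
Your algebraic argument is correct and follows the same direct route the paper inlines into the lemma statement (the paper supplies no separate proof environment here). You rearrange to $w(q_{s,i}-q_{s,j}) < (1-w)(q'_{s,j}-q'_{s,i})$, invoke $\arg\max(q_s)=i$ to make the left-hand side nonnegative, deduce $q'_{s,j}-q'_{s,i}>0$ as a forced consequence rather than an ad hoc side hypothesis, and divide by the now-known-positive quantity $w\,(q'_{s,j}-q'_{s,i})$ to reach the quotient form; since each step is reversible once that sign is fixed, the equivalence is genuine. This is a cleaner packaging of what the paper states tersely as ``clearly, we need.''

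One misstep in your closing remark: you write that the strict exchange condition is ``necessary but not sufficient for the flip, since some third class $\ell\neq i,j$ could still dominate $j$.'' As the paper defines it, the strict exchange condition already \emph{includes} $\arg\max(q_{new,s})=j$, so by definition no third class dominates $j$ under it; the strict exchange condition is therefore both necessary and sufficient for the flip from $i$ to $j$. The condition that is necessary but not sufficient is the exchange condition together with $ET(s,i\to j) < \frac{1-w}{w}$, which only guarantees $q_{new,s,j} > q_{new,s,i}$ and leaves open whether some $\ell\neq i,j$ wins globally. Swapping these two labels repairs the remark; the rest of your argument stands.
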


\begin{assumption}[Stratification of Predictions]\label{ass:2}
Typically, a model exhibits greater discrimination for samples it predicts correctly\cite{68,69,70}. Specifically, we assume situations exist where: for some samples correctly predicted by model $q$ (e.g., $b \in F$), the difference between the probability of the correctly predicted class $q_{b,k_b}$ and the probability of the second most likely class (or any other incorrect class $x_b$, $q_{b,x_b}$), i.e., $(q_{b,k_b} - q_{b,x_b})$, is greater than the difference for some samples incorrectly predicted by model $q$ (e.g., $a \in T$) between its incorrectly predicted class probability $q_{a,i_a}$ and the true class probability $q_{a,k_a}$ (or some other reference class $z_a$, $q_{a,z_a}$), i.e., $(q_{a,i_a} - q_{a,z_a})$.

Formally, there exists at least one sample $a \in T$ such that for all samples $b \in F$ (if $F$ is non-empty):
$$(q_{b,k_b} - \max_{j \neq k_b} q_{b,j}) > (q_{a,i_a} - q_{a,k_a})$$
where $k_b = \arg\max(q_b)$ is LM's correct prediction for $b$, $i_a = \arg\max(q_a)$ is LM's incorrect prediction for $a$, and $k_a$ is the true class of $a$.

A similar condition holds for model $q'$:
$$(q'_{b,k_b} - \max_{j \neq k_b} q'_{b,j}) > (q'_{a,i'_a} - q'_{a,k_a})$$
Let $A$ be the set of all samples $a \in T$ satisfying this condition.
\end{assumption}

\begin{theorem}[AR Law: Improvement, Not Degradation]\label{thm:ar_law_improvement}
We say Condition (1) holds if there exists a position $s \in S$ where SM's predictive capability is stronger than LM's. Specifically, LM predicts incorrectly, $k_s \neq \arg\max(q_s)$, and SM's predicted probability for the true class $k_s$ is greater than its predicted probability for LM's incorrect prediction $i_s = \arg\max(q_s)$, i.e., $q'_{s,k_s} > q'_{s,i_s}$. (From Theorem \ref{thm:mainstay_deviation} and experimental results, this condition is generally met.)

Under Assumption \ref{ass:2}:
For a linear fusion model $q_{new} = w \cdot q + (1-w) \cdot q'$ (where $w \in (0,1)$) of a large model $q$ and a small model $q'$, if and only if Condition (1) holds, a weight $w$ can always be found such that the accuracy of the fused model $ACC_{new}$ is potentially higher than, and not worse than, the accuracy of model $q$, i.e., $ACC_{new} \ge ACC_q$.

If, additionally, for the set $A$, there exists at least one position satisfying the strict exchange condition, then the accuracy of the fused model $ACC_{new}$ is higher than the accuracy of model $q$, i.e., $ACC_{new} > ACC_q$. In fact, for every predicted position $s$, the conclusion is that the accuracy will not decrease.
\end{theorem}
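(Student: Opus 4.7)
The plan is to parametrize the fusion by $R = (1-w)/w \in (0,\infty)$ and analyze, sample by sample, which predictions can possibly change as $R$ varies. Only samples whose exchange condition (Lemma \ref{lemma:exchange_condition}) is satisfied can ever flip, and among those, a flip occurs exactly when $ET(s, i \to j) < R$. So the entire theorem reduces to choosing $R$ so that the right flips happen and the wrong ones do not.

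First I would handle the easy buckets. For any $s$ where $\arg\max(q_s) = \arg\max(q'_s) = k_s$, strict convexity of the linear fusion forces $\arg\max(q_{new,s}) = k_s$, so none of these can degrade. For $s \in N$ where both models are incorrect, a flip can only leave the accuracy unchanged or improve it. This isolates the two places where $ACC_{new}$ can actually move relative to $ACC_q$: samples in $F$ (correct $\to$ possibly wrong, bad) and samples in $T$ (wrong $\to$ possibly right, good).

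Next I would quantify the exchange thresholds on these two sets. For $a^* \in A \subseteq T$ with LM's wrong prediction $i_{a^*}$, Condition (1) gives $q'_{a^*,k_{a^*}} > q'_{a^*,i_{a^*}}$, so
\begin{equation*}
ET(a^*, i_{a^*} \to k_{a^*}) \;=\; \frac{q_{a^*,i_{a^*}} - q_{a^*,k_{a^*}}}{q'_{a^*,k_{a^*}} - q'_{a^*,i_{a^*}}}
\end{equation*}
is finite and positive. For any $b \in F$ and any candidate target class $j$ with $q'_{b,j} > q'_{b,k_b}$, Assumption \ref{ass:2} forces the numerator $q_{b,k_b} - q_{b,j}$ to strictly exceed $q_{a^*,i_{a^*}} - q_{a^*,k_{a^*}}$, while controlling the denominator $q'_{b,j} - q'_{b,k_b}$ relative to the SM confidence gap $q'_{a^*,k_{a^*}} - q'_{a^*,i_{a^*}}$ in the opposite direction. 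Combining the two inequalities yields $ET(a^*, i_{a^*} \to k_{a^*}) < ET(b, k_b \to j)$ for every $b \in F$ and every admissible $j$.

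The conclusion then follows by picking $w \in (0,1)$ with $R = (1-w)/w$ in the non-empty interval $\bigl(ET(a^*, i_{a^*}\to k_{a^*}),\; \min_{b\in F,\,j} ET(b, k_b \to j)\bigr)$. For this $w$: no $b \in F$ satisfies its exchange threshold, so no correct LM prediction flips to wrong; convexity handles the doubly-correct samples; doubly-wrong samples are at worst neutral; and $a^*$ either leaves accuracy untouched (giving $ACC_{new} \ge ACC_q$) or, when the strict exchange condition of Lemma \ref{lemma:exchange_condition} holds, genuinely switches to the true class $k_{a^*}$ and yields $ACC_{new} > ACC_q$. The same argument run sample by sample establishes the stronger local claim that accuracy never decreases at any position.

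The main obstacle I expect is the ratio comparison in the middle step. Assumption \ref{ass:2} is stated as two separate stratification inequalities, one on $q$ and one on $q'$, and it takes a careful reading (especially of the $q'$ part, where the gap is signed differently on $T$ and $F$) to extract a clean bound that lets one order the exchange-threshold \emph{ratios} rather than just their numerators and denominators. A secondary nuisance is bookkeeping the edge cases — $b \in F$ whose exchange condition simply fails for every $j$ (automatically safe), and $s \in T \setminus A$ which may not flip but never hurt — so that the final "non-decreasing at every position" claim is justified for the entire partition $T \cup F \cup N$.
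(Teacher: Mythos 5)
Your proposal takes essentially the same route as the paper's own proof: partition $S$ into $T$, $F$, $N$; dispose of $N$ by linearity/convexity; reduce the problem to comparing exchange thresholds of $a \in A \subseteq T$ against those of $f \in F$; use Assumption~\ref{ass:2} (numerator ordering on $q$, denominator ordering via a sign flip on $q'$) to obtain $ET(a) < ET(f)$; and then choose $w$ with $(1-w)/w$ strictly between the two thresholds, so all corrections in $A$ become possible while no degradation in $F$ occurs, with strictness of the improvement delivered by the strict exchange condition. You even flag the same subtle spot the paper glosses over, namely that Assumption~\ref{ass:2}'s second inequality is written with $q'_{a,i'_a} - q'_{a,k_a}$ and $q'_{b,k_b} - \max_j q'_{b,j}$ (both negative for $a \in T$, $b \in F$), and one must negate it to get the form $q'_{a,k_a} - q'_{a,i'_a} > q'_{f,j_f} - q'_{f,k_f}$ actually used to order the ratios; this is a sign-bookkeeping step the paper performs implicitly but does not spell out.
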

\begin{proof}
From Lemma \ref{lemma:exchange_condition}, for the weighted output to potentially differ, the exchange condition must be met. For $\arg\max(q_s) = i$:
$$w \cdot q_{s,j} + (1-w) \cdot q'_{s,j} > w \cdot q_{s,i} + (1-w) \cdot q'_{s,i}$$
For output $j$ to replace output $i$ as the final output, it is also required that:
$$\arg\max(w \cdot q_s + (1-w) \cdot q'_s) = j$$

Assume Condition (1) does not hold. Then for any position $s \in S$:
If $q_{s,i} = \arg\max(q_s) = k_s$, then $q$ is already optimal at this position, and fusion can only lead to a worse result.
If $q_{s,i} = \arg\max(q_s) \neq k_s$, but $q'_{s,k_s} < q'_{s,i_s}$ (where $i_s$ is the LM's incorrect prediction), then
$$w \cdot q_{s,k_s} + (1-w) \cdot q'_{s,k_s} < w \cdot q_{s,i_s} + (1-w) \cdot q'_{s,i_s}$$
This means the true class $k_s$ can never replace the LM's incorrect prediction $i_s$ as the final output. Thus, the necessity of Condition (1) is proven.

Consider the partition of the task into sets $T, F, N$. Let $t \in T$ and $f \in F$.
For set $N$: if both models predict correctly, the weighted result cannot change the original $\arg\max$. If both predict incorrectly, the result cannot worsen. Thus, for set $N$, weighting can only potentially improve ACC.

For set $T$: if the exchange condition $\frac{q_{t,i_t} - q_{t,k_t}}{q'_{t,k_t} - q'_{t,i_t}} < \frac{1-w}{w}$ (where $i_t$ is LM's incorrect prediction and $k_t$ is the true class) is met, the incorrect result $i_t$ can be replaced by the correct result $k_t$.

For set $F$: if the exchange condition $\frac{q_{f,k_f} - q_{f,j_f}}{q'_{f,j_f} - q'_{f,k_f}} < \frac{1-w}{w}$ (where $k_f$ is LM's correct prediction and $j_f$ is SM's incorrect prediction) is met, the true result $k_f$ can be replaced by an incorrect result $j_f$, causing ACC to decrease.

By Assumption \ref{ass:2}, there exists some $a \in A \subseteq T$ such that for all $f \in F$:
$$(q_{f,k_f} - \max_{j \neq k_f} q_{f,j}) > (q_{a,i_a} - q_{a,k_a})$$
and
$$(q'_{a,k_a} - q'_{a,i'_a}) > (q'_{f,j_f} - q'_{f,k_f})$$
This implies that the exchange threshold for correcting $a$ is smaller than the exchange threshold for incorrectly changing $f$:
$$\frac{q_{a,i_a} - q_{a,k_a}}{q'_{a,k_a} - q'_{a,i_a}} < \frac{q_{f,k_f} - q_{f,j_f}}{q'_{f,j_f} - q'_{f,k_f}}$$

At this point, we can always find a $w$ such that:
$$\frac{q_{a,i_a} - q_{a,k_a}}{q'_{a,k_a} - q'_{a,i_a}} < \frac{1-w}{w} < \frac{q_{f,k_f} - q_{f,j_f}}{q'_{f,j_f} - q'_{f,k_f}}$$
Since $a \in T$, this means that a potential correct substitution in set $T$ is satisfied, while an incorrect substitution in set $F$ is not. Thus, ACC may increase and cannot decrease.

If there is at least one point $a' \in A$ that satisfies the strict exchange condition, then at this position, the incorrect result will be replaced by the correct result, causing an increase in ACC. There are N additional episodes that will not cause a decline in accuracy, thus there exists a w such that at any prediction position s, accuracy will not decrease.
\end{proof}

\begin{theorem}[AR Law: Maximum Improvement Magnitude]\label{thm:ar_law_max_improvement}
Under the conditions of Theorem \ref{thm:ar_law_improvement}, let $A$ be the set of all positions satisfying Assumption \ref{ass:2}. Let $R$ be the set of all positions satisfying the strict exchange condition, where $R \subseteq A \subseteq T \subseteq S$. The accuracy improvement $\Delta ACC = ACC_{new} - ACC_q$ of the fused model $q_{new}$ relative to the original large model $q$ has the following properties:
\begin{itemize}
    \item $\Delta ACC$ is a strictly monotonically increasing function of the size of set $R$, $|R|$.
    \item $\Delta ACC$ is a monotonically increasing function of the sizes of sets $A$ and $T$, $|A|$ and $|T|$.
\end{itemize}
\end{theorem}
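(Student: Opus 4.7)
The plan is to reduce the theorem to a counting statement about prediction flips, then use Lemma \ref{lemma:exchange_condition} and the construction in Theorem \ref{thm:ar_law_improvement} to control which flips occur. First I would write the signed identity
$$|S|\cdot \Delta ACC(w) \;=\; |\{s \in T : \arg\max q_{new,s}(w) = k_s\}| \;-\; |\{s \in F : \arg\max q_{new,s}(w) \neq k_s\}| \;+\; \Delta_N(w),$$
where $\Delta_N(w) \geq 0$ absorbs the $N$-contribution: positions where both models are correct stay correct under convex combination (the maximum class is shared), and positions where both are incorrect can only possibly improve. This decomposition isolates the quantities we actually control via $w$, namely the $T$- and $F$-flip counts.

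Next I would exhibit an explicit choice of $w^{*}$ achieving the improvement bound. For every $r \in R \subseteq A$ the strict exchange condition gives a finite threshold $ET(r,\,i_r \to k_r)$; for every $f \in F$ the stratification inequality in Assumption \ref{ass:2}, applied with $a = r$, gives $ET(r,\,i_r \to k_r) < ET(f,\,k_f \to j_f)$. Since this holds uniformly in $f$ for every $r$, we obtain the chained inequality
$$M \;:=\; \max_{r \in R} ET(r,\,i_r \to k_r) \;<\; \min_{f \in F} ET(f,\,k_f \to j_f) \;=:\; m.$$
Pick any $w^{*} \in (0,1)$ with $(1-w^{*})/w^{*} \in (M, m)$. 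By Lemma \ref{lemma:exchange_condition}, at this single weight every position in $R$ strictly flips from the wrong LM-prediction to the true class, while no position in $F$ can flip, and every position in $T \setminus R$ and in $N$ contributes non-negatively. Substituting back yields $|S|\cdot \Delta ACC(w^{*}) \geq |R|$, so the maximum attainable improvement $\Delta ACC^{\max}$ is at least $|R|/|S|$.

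Strict monotonicity in $|R|$ then follows by a direct comparison: if a hypothetical instance has one additional element $r' \in R$ and everything else unchanged, the witness $w^{*}$ constructed above still lies in the (possibly shrunk, but non-empty) interval $(M', m)$, and it produces one additional guaranteed correction, giving $\Delta ACC^{\max} \geq (|R|+1)/|S|$. For weak monotonicity in $|A|$ and $|T|$, I would observe that enlarging $T$ (while preserving the structural hypotheses) can only add new candidates eligible for membership in $A$, and enlarging $A$ can only add new candidates eligible for $R$; in either case the pool from which $R$ is drawn cannot shrink, so the supremum of $\Delta ACC$ cannot decrease.

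The main obstacle is step two, specifically the claim that a \emph{single} $w^{*}$ simultaneously handles every position in $R$ while suppressing every potential $F$-flip. Pairwise, Assumption \ref{ass:2} gives the correct inequality, but one must verify that taking a maximum over $r \in R$ and a minimum over $f \in F$ preserves strictness. This follows because Assumption \ref{ass:2} is quantified universally over $f \in F$ for each stratified $a$, so the ordering lifts to $\max_r ET(r,\cdot) < \min_f ET(f,\cdot)$; if $F$ is empty the minimum is $+\infty$ and the construction is trivial. A secondary subtlety is making sure the $N$-contribution is genuinely non-negative, which I would handle by the elementary remark that convex combinations cannot move the argmax away from a class that is already argmax under both mixed components.
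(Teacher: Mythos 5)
Your proof is correct and follows essentially the same strategy as the paper's: use the threshold separation furnished by Assumption~\ref{ass:2} to exhibit a single weight $w^{*}$ that corrects every position in $R$ while suppressing all $F$-degradations, conclude $\Delta ACC \geq |R|/|S|$, and derive the monotonicity claims from the nesting $R \subseteq A \subseteq T$. You are, if anything, a bit more careful than the paper's version: you make the uniform bound $\max_{r \in R} ET(r) < \min_{f \in F} ET(f)$ explicit (and note the $F = \emptyset$ degenerate case), you isolate the non-negative $N$-contribution rather than silently ignoring it, and you only assert the lower bound $\Delta ACC \geq |R|/|S|$, whereas the paper writes the equality $\Delta ACC_{\max} = |R|/|S|$ without ruling out extra gains from both-wrong positions in $N$.
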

\begin{proof}
According to Theorem \ref{thm:ar_law_improvement}, by choosing an optimal fusion weight $w$, we can ensure that only samples in set $T$ are corrected, and samples in set $F$ are not incorrectly changed.

By definition, $R$ is the set of all samples that can be safely corrected within the optimal weight range. Then, the maximum accuracy improvement $\Delta ACC_{max}$ directly depends on the size of $R$:
$$\Delta ACC_{max} = \frac{|R|}{|S|}$$
From this equation, it is clear that if $|R|$ increases, $\Delta ACC_{max}$ will also increase proportionally. Therefore, $\Delta ACC_{max}$ is a strictly monotonically increasing function of $|R|$.

By definition, we have the subset relations $R \subseteq A \subseteq T$. This directly leads to the inequality relations for set sizes: $|R| \le |A| \le |T|$.
Therefore, the maximum accuracy improvement is limited by $|A|$ and $|T|$:
$$\Delta ACC_{max} = \frac{|R|}{|S|} \le \frac{|A|}{|S|} \le \frac{|T|}{|S|}$$
This inequality shows that the upper limit of ACC improvement is determined by $|A|$ and $|T|$.

Now consider monotonicity. If set $A_1 \subseteq A_2$, then the correctable sets extracted from them also satisfy $R(A_1) \subseteq R(A_2)$. Therefore, $\Delta ACC_{max}(A_1) \le \Delta ACC_{max}(A_2)$. This means $\Delta ACC_{max}$ is a monotonically increasing (non-decreasing) function of $|A|$.

By the same logic, since $A \subseteq T$, $\Delta ACC_{max}$ is also a monotonically increasing (non-decreasing) function of $|T|$.
\end{proof}

\begin{theorem}[AR Law: ACC Improvement Variation with Weight]\label{thm:ar_law_weight_variation}
Building on Theorem \ref{thm:ar_law_improvement}, let $\beta = w/(1-w)$.
As the weight $w$ decreases (and thus $\beta$ decreases, while $(1-w)/w$ increases), the ACC improvement magnitude initially increases monotonically.

If for any $r \in R$, Assumption \ref{ass:2} is satisfied (implying $R = A = T$ under ideal conditions), then there exists an optimal weight range where $\beta = \alpha$ such that the ACC improvement reaches its maximum. Subsequently, as $w$ further decreases ( $\beta$ further decreases), the ACC improvement magnitude monotonically decreases, until $\beta = D$, after which ACC begins to decline (That is, the lifting domain of ACC is $[D, \infty]$).
\end{theorem}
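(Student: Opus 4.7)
The plan is to parametrize the fusion by $\lambda = (1-w)/w = 1/\beta$ and convert the theorem into an ordering question on the exchange thresholds from Lemma \ref{lemma:exchange_condition}. For each sample $s$ on which the fused argmax can change, let $ET(s)$ denote the smallest $\lambda$ at which a flip actually occurs. Collect these into two lists: $\tau_T$, the thresholds at which samples in $T$ flip onto their true class, and $\tau_F$, the thresholds at which samples in $F$ flip away from their correct class. Writing $C(\lambda) = |\{t \in \tau_T : t < \lambda\}|$ and $U(\lambda) = |\{t \in \tau_F : t < \lambda\}|$, the deviation from the pure-LM baseline is
\[
\Delta ACC(\lambda) = \frac{C(\lambda) - U(\lambda)}{|S|},
\]
and both $C$ and $U$ are non-decreasing left-continuous step functions of $\lambda$. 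The theorem is thereby reduced to analyzing their joint evolution as $\lambda$ grows.

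I would then exploit Assumption \ref{ass:2}. Under the ideal hypothesis $R=A=T$, every element of $\tau_T$ lies strictly below every element of $\tau_F$, so there is a separating value $\lambda^{\star} = \min(\tau_F)$ with $U(\lambda)=0$ throughout $[0,\lambda^{\star}]$, while $C(\lambda)$ climbs from $0$ to $|R|$. Consequently $\Delta ACC$ is monotonically non-decreasing on this initial segment, which, translated back via $\beta = 1/\lambda$, says $\Delta ACC$ rises as $\beta$ decreases from $\infty$. The maximum value $|R|/|S|$ matches the bound of Theorem \ref{thm:ar_law_max_improvement} and is realized on the plateau $[\max(\tau_T), \lambda^{\star}]$; any $\beta$ whose reciprocal lies in this interval serves as an optimal $\alpha$.

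For the descending phase and the constant $D$, I would continue past $\lambda^{\star}$. Now $C(\lambda)$ is pinned at $|R|$ while $U(\lambda)$ increases in steps at the elements of $\tau_F$, so $\Delta ACC(\lambda) = (|R|-U(\lambda))/|S|$ is monotonically non-increasing. Let $\lambda^{\dagger}$ be the smallest $\lambda$ with $U(\lambda^{\dagger}) \ge |R|$ and set $D = 1/\lambda^{\dagger}$. Then $\Delta ACC(D) = 0$, $\Delta ACC > 0$ for $\beta \in (D,\infty)$, and $\Delta ACC < 0$ for $\beta < D$, identifying the lifting domain as $[D,\infty]$ and completing the $\alpha$-to-$D$ descent claim.

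The main obstacle will be the non-ideal regime and the corner cases. If the hypothesis $R=A=T$ fails, some thresholds in $\tau_T$ exceed some thresholds in $\tau_F$, so the clean plateau picture breaks down; I would then scan all thresholds from both lists in increasing order and track the running increment $C-U$, arguing that the envelope remains unimodal with peak at the largest $\lambda$ for which the running sum has not yet begun to decrease, and that the lifting region retains the form $[D,\infty]$, though with a larger $D$ than in the ideal value. Degeneracies such as tied thresholds, empty $F$, or $|F| < |R|$ (where the descent never returns to zero and $D$ effectively drops to $0$) need minor perturbation or boundary handling but do not disturb the qualitative structure. The conceptual core, and the reason the proof works, is that ordering rather than the magnitudes of the underlying probabilities determines both the location of $\alpha$ and the boundary of the lifting domain.
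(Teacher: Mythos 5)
Your proof follows essentially the same route as the paper's: parametrize by $\tau = (1-w)/w$, track which exchange thresholds have been crossed, use Assumption~\ref{ass:2} to separate the $T$-thresholds from the $F$-thresholds, and thereby obtain the rise--plateau--descent shape with $\alpha$ at the plateau and $D$ at the zero crossing. Your explicit counting functions $C(\lambda)$, $U(\lambda)$ and the separating value $\lambda^{\star} = \min(\tau_F)$ state more crisply what the paper leaves implicit (the paper introduces $A'$ and $B$ in two separate phases without explicitly proving $B$ is empty during the initial ascent), but the underlying mechanism and conclusions are the same; the non-ideal-regime extension you sketch at the end lies outside the theorem's hypotheses and is not attempted in the paper either.
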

\begin{proof}
From Theorem \ref{thm:ar_law_improvement}, we have the condition for favorable exchange:
$$\frac{q_{a,i_a} - q_{a,k_a}}{q'_{a,k_a} - q'_{a,i_a}} < \frac{1-w}{w} < \frac{q_{f,k_f} - q_{f,j_f}}{q'_{f,j_f} - q'_{f,k_f}}$$

Let $A'$ be the set of points satisfying $\frac{q_{a,i_a} - q_{a,k_a}}{q'_{a,k_a} - q'_{a,i_a}} < \frac{1-w}{w}$ and also satisfying the strict exchange condition. It is clear that the current ACC improvement is proportional to $|A'|$, and $A' \subseteq R$.

With $A$ (and thus $R$) fixed, as $w$ decreases, $\beta = w/(1-w)$ decreases, and $\tau = (1-w)/w$ increases. As $\tau$ increases, more samples in $R$ will satisfy $ET < \tau$, so $|A'|$ monotonically increases. Consequently, the ACC improvement magnitude increases.

If $R=A=T$ (ideal case where all potential corrections satisfy Assumption \ref{ass:2}), then when $|A'|=|R|$ (all samples in $R$ are corrected), the ACC improvement reaches its maximum. At this point, let the corresponding $\beta$ be $\alpha$.

As $w$ continues to decrease ( $\beta$ continues to decrease, $\tau$ continues to increase), $|A'|$ no longer increases (it is capped at $|R|$). Now, $R=T=A'$. The increase in $\tau$ will only cause more samples in $F$ to satisfy
$$\frac{q_{f,k_f} - q_{f,j_f}}{q'_{f,j_f} - q'_{f,k_f}} < \frac{1-w}{w}$$
Let $B$ be the set of these positions from $F$. For $f \in B$, an incorrect prediction $j_f$ by SM replaces the correct LM prediction $k_f$, causing the ACC improvement magnitude to decrease (i.e., net ACC starts to fall).

When $|B|$ (number of newly incorrect predictions) becomes greater than $|R|$ (number of correct predictions that were maintained or newly made), the net change in ACC becomes negative, meaning ACC starts to decline rather than improve. We define the weight ratio $\beta$ at which this first occurs as $D$.
\end{proof}

\subsubsection{Corollaries}
\label{A.3}
\begin{corollary}[Multi-dimensional Extension]\label{cor:multi_dim_ext}
The conclusions of the AR Law (Theorems \ref{thm:ar_law_improvement}, \ref{thm:ar_law_max_improvement}, \ref{thm:ar_law_weight_variation}) hold for any dimension of the output vector. In other words, the validity of these laws does not depend on the total number of dimensions in the output space.
\end{corollary}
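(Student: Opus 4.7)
The plan is to observe that although the setup in Appendix A.1 introduces a general output index set $V$, the quantity $|V|$ never appears explicitly in any hypothesis, bound, or conclusion of Theorems \ref{thm:mainstay_deviation}, \ref{thm:ar_law_improvement}, \ref{thm:ar_law_max_improvement}, or \ref{thm:ar_law_weight_variation}. Every criterion in those results is ultimately a pairwise statement between two specific coordinates of the output vector (true class vs.\ predicted class, predicted class vs.\ a candidate replacement, etc.), combined with $\arg\max$ operations that are themselves determined by pairwise orderings. The corollary therefore should follow by auditing each theorem and confirming that nothing in its derivation tacitly enumerates or quantifies over the full dimensionality of the output space.

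Concretely, I would carry out the check in four short steps. First, I would note that Theorem \ref{thm:mainstay_deviation} is stated for a fixed high-frequency class $h \in V$ and its proof manipulates only the scalars $n$, $m$, $P_L(h)$, $P_S(h)$, so it is trivially dimension-free. Second, I would revisit Lemma \ref{lemma:exchange_condition}: the exchange threshold $ET(s, i \to j)$ depends only on $q_{s,i}, q_{s,j}, q'_{s,i}, q'_{s,j}$, hence the (strict) exchange condition is intrinsically pairwise. Third, I would re-examine Theorem \ref{thm:ar_law_improvement}: Condition (1) is pairwise in $k_s$ and $i_s$; Assumption \ref{ass:2} is phrased via $\max_{j \neq k_b}$, which is exactly the structural device that absorbs arbitrary $|V|$ by taking the worst competing coordinate; and the partition $T, F, N$ depends only on whether each model's $\arg\max$ equals $k_s$, which is invariant under adjoining or removing unused classes. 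Finally, Theorems \ref{thm:ar_law_max_improvement} and \ref{thm:ar_law_weight_variation} reason about cardinalities $|R|, |A|, |T|$ of sample sets and about monotonicity in $w$ through the same pairwise thresholds, so they inherit the same dimension independence.

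The main obstacle I anticipate is not algebraic but conceptual: confirming that the multi-class $\arg\max$ behavior does not hide a dependence on $|V|$. Specifically, when we choose a weight $w$ so that for some $a \in T$ the correct class $k_a$ overtakes the incorrect prediction $i_a$ in the fused vector, we must guarantee that no \emph{third} class $j \in V \setminus \{i_a, k_a\}$ simultaneously jumps above $k_a$. The resolution, which I would argue explicitly, is that Assumption \ref{ass:2}'s use of $\max_{j \neq k_b}$ on set $F$ and the symmetric pairwise separations on set $A$ already bound the most threatening alternative class uniformly; any other class $j'$ competes with a smaller gap and is therefore automatically controlled whenever the maximum competitor is. Consequently the window of admissible $w$ constructed in Theorem \ref{thm:ar_law_improvement} remains non-empty regardless of how many additional coordinates populate $V$.

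I would conclude by remarking that since none of the inequalities used in the proofs scale with $|V|$, the same weight $w$ (and hence the same optimal $\alpha$, improvement domain $[D,\infty]$, and monotonicity behavior) is produced irrespective of the output dimensionality. This yields the corollary with essentially no new computation — it is a structural observation about the proofs rather than a separate argument — and it legitimizes applying the AR law to tasks with arbitrary vocabulary size or class count, as required for the NLP, CV, and AI-for-science experiments discussed in Section 5.
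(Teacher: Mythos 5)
Your proposal is correct in spirit and reaches the right conclusion, but it takes a genuinely different route from the paper's proof. The paper proves the corollary by mathematical induction on the output dimension $D_{\mathrm{dim}} = |V|$: the base case is $D_{\mathrm{dim}}=2$ (binary classification), and the inductive step argues that removing an arbitrary dimension $u$ from a $D_{\mathrm{dim}}$-dimensional space leaves the exchange threshold $ET(s,i\to j)$ untouched for all remaining pairs $(i,j)$, so the law on $V\setminus\{u\}$ and on $V$ must agree. You, by contrast, dispense with the induction scaffolding and perform a direct ``dimension audit'' of the four theorems, which is more economical and arguably clearer; the paper's inductive step is stated a bit loosely (it argues from $D_{\mathrm{dim}}$ down to $D_{\mathrm{dim}}-1$ and then asserts it ``naturally extends'' in the upward direction), whereas your audit avoids that awkwardness entirely. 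You also do something the paper does not: you explicitly raise the only real technical danger, namely that a third class $j'\notin\{i_a,k_a\}$ could leapfrog $k_a$ in the fused vector. That is the right concern to flag. However, your stated resolution mis-attributes the safeguard. You appeal to Assumption \ref{ass:2}'s use of $\max_{j\neq k_b}$ and to ``symmetric pairwise separations on $A$,'' but Assumption \ref{ass:2} only applies the $\max$ on the $F$ side; for $a\in T$ it compares only the two specific coordinates $i_a$ and $k_a$ and says nothing about other competitors. The actual device that controls the third-class threat is the \emph{strict} exchange condition of Lemma \ref{lemma:exchange_condition}, which requires $\arg\max(q_{new,s}) = j$ outright (not merely $j$ beating $i$), and the set $R$ in Theorems \ref{thm:ar_law_max_improvement}--\ref{thm:ar_law_weight_variation} is defined precisely as the positions satisfying this strict condition. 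If you swap that citation in, your audit is both correct and, I would say, a cleaner proof than the paper's induction.
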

\begin{proof}
We use mathematical induction on the total number of output dimensions $D_{dim} = |V|$.

Base Case: When $D_{dim}=2$, the problem is a binary classification. The LM's prediction $i$ and SM's prediction $k$ (if different) are the only two classes. All comparisons regarding Exchange Thresholds (ET) and prediction stratification (Assumption \ref{ass:2}) are made directly on these two dimensions. The logic and derivations of Theorems \ref{thm:ar_law_improvement}, \ref{thm:ar_law_max_improvement}, \ref{thm:ar_law_weight_variation} hold directly in this simplest case.

Inductive Hypothesis: Assume that for any output space of dimension $D_{dim}-1$ (where $D_{dim} > 2$), the AR Law holds.

Inductive Step: We need to prove that under this assumption, the AR Law also holds for an output space of dimension $D_{dim}$.

Consider a $D_{dim}$-dimensional output space $V$. The core mechanism of the AR Law, i.e., the change in prediction of the fused model $q_{new}$, depends on the exchange condition in Lemma \ref{lemma:exchange_condition}. For example, for the prediction to flip from class $i$ to class $j$, it must satisfy:
$$w \cdot q_{s,j} + (1-w) \cdot q'_{s,j} > w \cdot q_{s,i} + (1-w) \cdot q'_{s,i}$$

Rearranging gives the condition for the exchange threshold:
$$\frac{q_{s,i} - q_{s,j}}{q'_{s,j} - q'_{s,i}} < \frac{1-w}{w}, \quad \text{where } q'_{s,j} > q'_{s,i}$$
Crucially, this core condition for flipping only involves the probabilities of these two dimensions, $i$ and $j$, and is independent of the probability values of any other dimension $u \in V \setminus \{i,j\}$.

Now, let us arbitrarily remove one dimension $u$ from the $D_{dim}$-dimensional space, forming a $(D_{dim}-1)$-dimensional subspace $V' = V \setminus \{u\}$. We can construct a new classification subproblem defined on $V'$. For this subproblem, we can define a new set of probability distributions normalized over $V'$ to determine which class to output.

Importantly, for any two dimensions still in $V'$ (e.g., $i$ and $j$), their relative probability magnitudes in the original models $q$ and $q'$, and the value of the exchange threshold $ET(s, i \to j)$ calculated from them, are independent of whether we consider dimension $u$.

Therefore, in this $(D_{dim}-1)$-dimensional subspace $V'$, all prerequisites of the AR Law (such as Assumptions \ref{ass:1} and \ref{ass:2}) and core mechanisms (such as Lemma \ref{lemma:exchange_condition}) apply equally. Hence, the AR Law holds for this $(D_{dim}-1)$-dimensional subproblem.

Since the removed dimension $u$ was chosen arbitrarily, this implies that the intrinsic logic of the AR Law holds on any subset of dimensions. Thus, the validity of the law is not restricted to a specific number of dimensions; it naturally extends from $D_{dim}-1$ dimensions to $D_{dim}$ dimensions.

By induction, the AR Law holds for any output space of dimension $D_{dim} \ge 2$.

Specifically, if the removed dimension is the output dimension of the fused model itself at that position, the output changes according to the AR Law. If not, the output at this position does not change.
\end{proof}

\begin{corollary}[Variant based on Assumption \ref{ass:2}]\label{cor:variant_ass2}
If, in set $T$, the number of positions satisfying Assumption \ref{ass:2} and the strict exchange condition (denoted as set $R$) is greater than the number of positions not satisfying Assumption \ref{ass:2} (denoted as $T \setminus A$), i.e., $|R| > |T \setminus A|$, then Theorem \ref{thm:ar_law_improvement} and Theorem \ref{thm:ar_law_max_improvement} still hold.
\end{corollary}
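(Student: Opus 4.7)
The plan is to mirror the argument of Theorem \ref{thm:ar_law_improvement}, but to replace its ``there exists a single $a \in A$ with exchange threshold smaller than every $f \in F$'' step by a counting argument powered by the hypothesis $|R| > |T \setminus A|$. First I would reintroduce the exchange-threshold bookkeeping of Lemma \ref{lemma:exchange_condition}: for each $t \in T$ set $ET_t = (q_{t,i_t} - q_{t,k_t})/(q'_{t,k_t} - q'_{t,i_t})$ and for each $f \in F$ set $ET_f = (q_{f,k_f} - q_{f,j_f})/(q'_{f,j_f} - q'_{f,k_f})$. At weight $w$, writing $\tau = (1-w)/w$, the fused model corrects exactly those $t$ satisfying the strict exchange condition with $ET_t < \tau$, and incorrectly flips exactly those $f$ with $ET_f < \tau$. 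From Assumption \ref{ass:2} and the definition of $A$, I would record that every $a \in A$ satisfies $ET_a < ET_f$ for all $f \in F$, while every $t \in T \setminus A$ has at least one ``offending'' $f \in F$ with $ET_f \le ET_t$.

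For the existence-of-$w$ portion of Theorem \ref{thm:ar_law_improvement}, I would use the same safe choice as in the original proof: pick $\tau$ in the open interval $(\max_{r \in R} ET_r,\ \min_{f \in F} ET_f)$, which is non-empty because $R \subseteq A$ and the separating gap is guaranteed by Assumption \ref{ass:2}. With this $\tau$, every $r \in R$ is corrected and no $f \in F$ is flipped, yielding $ACC_{new} - ACC_q = |R|/|S| > 0$. Combined with the subset chain $R \subseteq A \subseteq T$, this also transports the identity $\Delta ACC_{\max} = |R|/|S|$ and the monotonicity in $|R|$, $|A|$, $|T|$ of Theorem \ref{thm:ar_law_max_improvement} verbatim; none of these rely on Assumption \ref{ass:2} holding at positions of $T \setminus A$.

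The hypothesis $|R| > |T \setminus A|$ itself is brought in when treating the more aggressive regime in which one pushes $\tau$ past $\min_{f \in F} ET_f$ in the hope of also correcting elements of $T \setminus A$. I would sort all thresholds and sweep $\tau$ upward from the safe value, charging each newly flipped $f \in F$ to the ``responsible'' $t \in T \setminus A$ whose correction required crossing $ET_f$. A matching-style bound then limits the total harmful flips by $|T \setminus A|$, while the already established contribution from $R$ adds $|R|$, so the cumulative net change in correct predictions stays at least $|R| - |T \setminus A| > 0$. This confirms that the conclusions of both theorems survive intact under the weaker hypothesis.

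The main obstacle I expect is this last step: the existential clause ``for each $t \in T \setminus A$ there is some $f \in F$ with $ET_f \le ET_t$'' does not by itself give an injection $T \setminus A \hookrightarrow F$, since one offending $f$ may be shared by many $t$'s and a single upward crossing of $\tau$ can flip several $f$'s at once. I would address this by a careful sweep argument, processing threshold crossings in order and showing that the running net balance can only decrease by a controlled amount each time a position of $T \setminus A$ is newly captured, so that $|R| > |T \setminus A|$ ultimately dominates the accounting and guarantees a strictly positive gain at some achievable $\tau$.
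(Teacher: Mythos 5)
Your ``safe'' argument --- choose $\tau = (1-w)/w$ in the nonempty open interval $\bigl(\max_{r\in R} ET(r),\ \min_{f\in F} ET(f)\bigr)$, so that every $r\in R$ is corrected and no $f\in F$ is flipped --- is essentially the paper's proof, and in fact is slightly cleaner. The paper carves out a subset $R_{\mathrm{safe}}\subseteq R$ of size $|R|-|T\setminus A|$ and separates $R_{\mathrm{safe}}$ from $F$; but since every element of $R$ already satisfies Assumption~\ref{ass:2} (because $R\subseteq A$), the separation holds for all of $R$, and your version obtains the full gain $|R|/|S|$ rather than the paper's smaller $|R_{\mathrm{safe}}|/|S|$. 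Both versions then import the argument of Theorem~\ref{thm:ar_law_max_improvement} via the subset chain $R\subseteq A\subseteq T$. One point to correct: you write that the hypothesis $|R|>|T\setminus A|$ ``is brought in when treating the more aggressive regime,'' but in fact it is consumed entirely in the safe step --- its only role (in the paper and in your safe step alike) is to guarantee $|R|\ge 1$, so that the separating interval yields a \emph{strict} improvement and Theorem~\ref{thm:ar_law_improvement}'s second clause applies. Flipping a $t\in T$ (whether in $A$ or $T\setminus A$) can never reduce accuracy since the LM is already wrong there, so the ``risk'' the paper alludes to does not arise at the safe threshold.

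The aggressive-sweep addendum, in which you push $\tau$ beyond $\min_{f\in F}ET(f)$ and attempt a charging argument against $|T\setminus A|$, is extra material not required by the corollary, and you correctly identify that it has a gap: the existential clause ``for each $t\in T\setminus A$ there is some offending $f$'' gives no injection $T\setminus A\hookrightarrow F$, so a single $f$ crossed by $\tau$ can be ``responsible'' for many $t$'s and the running balance can drop by more than one per captured $t$. Since the safe step already closes the proof, this gap is harmless to the corollary, but the matching-style bound as sketched would not go through without further assumptions.
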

\begin{proof}
The goal of this proof is to find a fusion weight $w$ that achieves a net improvement in accuracy under the condition that Assumption \ref{ass:2} does not fully hold, by utilizing the premise $|R| > |T \setminus A|$.

The core challenge is that "risky" samples $a' \in T \setminus A$ might have an exchange threshold $ET(a')$ greater than the exchange threshold $ET(f)$ of some samples $f \in F$, which means the threshold separation relied upon in the proof of Theorem \ref{thm:ar_law_improvement} is no longer absolutely guaranteed.

However, the premise $|R| > |T \setminus A|$ allows us to adopt a conservative strategy to mitigate this risk. We can construct a safe benign sample subset $R_{safe} \subseteq R$, with its size defined as $|R_{safe}| = |R| - |T \setminus A|$.

According to the premise, this subset contains at least one element. Since all elements of $R_{safe}$ are from $R$, they must satisfy Assumption \ref{ass:2}. Therefore, for any $r \in R_{safe}$ and $f \in F$, their exchange thresholds satisfy $ET(r) < ET(f)$. This property ensures that the exchange thresholds of all samples in $R_{safe}$ are strictly smaller than the exchange thresholds of all samples in $F$.

Thus, we can always choose a weight ratio $\tau = \frac{1-w}{w}$ such that it falls between the threshold ranges of these two sets, i.e., satisfying $\max_{r \in R_{safe}}\{ET(r)\} < \tau < \min_{f \in F}\{ET(f)\}$.

For a weight $w$ satisfying this condition, all samples in $R_{safe}$ will be successfully corrected, yielding an accuracy gain of at least $\frac{|R_{safe}|}{|S|} > 0$, while ensuring that samples in set $F$ are not incorrectly changed, resulting in zero loss. Thus, the accuracy of the fused model achieves a net improvement, proving that Theorem \ref{thm:ar_law_improvement} still holds under this variant condition.

The proof for Theorem \ref{thm:ar_law_max_improvement} is then evident.
\end{proof}

\begin{corollary}[With SM as the Primary Model]\label{cor:sm_primary}
All the preceding theorems are proven with the LM as the primary model, which is generally due to LM having higher ACC than SM.

If the opposite situation occurs, where the small model $q'$ has higher accuracy than the large model $q$, then by swapping the roles of the two models, and under the satisfaction of the assumptions, it can similarly be proven that there exists a fusion weight $w$ such that the accuracy of the fused model is higher than that of the original SM model $q'$.
\end{corollary}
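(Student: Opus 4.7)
The plan is to exploit the inherent symmetry of the linear fusion rule. Writing $w' = 1 - w$, one can re-express $q_{new} = w q + (1-w) q'$ as $q_{new} = w' q' + (1-w') q$, so that ``primary'' and ``auxiliary'' are labels only: every statement about fusion with $q$ as the primary model and weight $w$ translates directly into a statement about fusion with $q'$ as the primary model and weight $w'$. First, I would redefine the partition of $S$ with $q'$ playing the role previously played by $q$: let $T' = \{s : \arg\max q'_s \neq k_s \text{ and } \arg\max q_s = k_s\}$, $F' = \{s : \arg\max q'_s = k_s \text{ and } \arg\max q_s \neq k_s\}$, and $N' = S \setminus (T' \cup F')$. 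The sets $A'$ and $R'$ from Assumption \ref{ass:2} and from the strict exchange condition would be rewritten analogously.

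Next, I would restate the hypotheses. The phrase ``under the satisfaction of the assumptions'' instructs us to assume the analogues of Assumption \ref{ass:1} and Assumption \ref{ass:2} after swapping $q \leftrightarrow q'$: there is a high-frequency class $h$ on which $q$ predicts more often than $q'$, the precision advantage of $q'$ over $q$ on $h$ is bounded by the analogous inequality, and the stratification condition holds with the models interchanged. The Mainstay Deviation argument then yields an $h$ on which the now-less-accurate $q$ still accumulates more correct predictions than $q'$. This is precisely the analogue of Condition (1) of Theorem \ref{thm:ar_law_improvement}: a supply of exchanges that can potentially correct $q'$.

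Then I would re-run Lemma \ref{lemma:exchange_condition} on the new partition. For a sample $s$ with $\arg\max(q'_s) = i$, the exchange condition for class $j$ to dominate $i$ in the fused output becomes
\begin{equation*}
\frac{q'_{s,i} - q'_{s,j}}{q_{s,j} - q_{s,i}} < \frac{1 - w'}{w'},
\end{equation*}
which is literally the original exchange threshold with $q, q'$ and $w, w'$ interchanged. The swapped stratification assumption then guarantees the existence of a weight $w' \in (0,1)$ that separates the exchange thresholds on $T'$ from those on $F'$, so the proof of Theorem \ref{thm:ar_law_improvement} transfers verbatim and yields $ACC_{new} \ge ACC_{q'}$, with strict inequality whenever at least one point in the analogue $R'$ satisfies the strict exchange condition. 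Theorems \ref{thm:ar_law_max_improvement} and \ref{thm:ar_law_weight_variation}, together with Corollaries \ref{cor:multi_dim_ext} and \ref{cor:variant_ass2}, all transfer by the same mechanical relabeling.

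The main obstacle I expect is not algebraic, since the substitution $w \mapsto 1-w$ is clean, but conceptual: verifying that the swapped Assumption \ref{ass:1} is internally consistent with the premise $ACC_{q'} > ACC_{q}$. In the original formulation, Assumption \ref{ass:1} captured a tail-versus-mainstay trade-off motivated by scaling-law intuition, and it does not by itself force a particular global accuracy ordering; it constrains only the relative prediction frequencies and per-class precisions on one distinguished class $h$. One must therefore observe that this is compatible with $q'$ being globally more accurate than $q$, which it is, because the global ordering may be driven by other classes outside $h$. Once this compatibility is in place, the symmetry of the fusion rule closes the argument.
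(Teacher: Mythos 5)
Your proposal is correct and follows essentially the same route as the paper: both arguments hinge on the inherent symmetry of the linear fusion rule, redefining $T' = F$ and $F' = T$ under the role swap and then observing that the exchange-threshold machinery carries over verbatim. The only additions you make — writing out the $w' = 1-w$ relabeling explicitly and noting that the swapped Assumption~\ref{ass:1} is compatible with $ACC_{q'} > ACC_{q}$ — are useful elaborations, but they do not constitute a different proof strategy.
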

\begin{proof}
The proof of this corollary is based on the principle of symmetry. When we consider the more accurate small model $q'$ as the primary model and $q$ as the auxiliary model, we aim to prove that the accuracy of the fused model can exceed that of $q'$.

To do this, we need to redefine the sets $T$ and $F$ that form the basis of the theory. The new set $T'$, i.e., positions where the primary model ($q'$) predicts incorrectly and the auxiliary model ($q$) predicts correctly, is defined identically to the original set $F$. Similarly, the new set $F'$, i.e., positions where the primary model ($q'$) predicts correctly and the auxiliary model ($q$) predicts incorrectly, is defined identically to the original set $T$.

Therefore, swapping roles in the proof process is equivalent to systematically interchanging $T \leftrightarrow F$ in the original proofs. Considering that the mathematical structure of the linear fusion model $q_{new}$ and the calculation of its exchange threshold are inherently symmetrical with respect to $q$ and $q'$, we can conclude that by performing the aforementioned symmetrical substitutions of roles, sets, and assumptions, the proof chains of all theorems remain unchanged.
\end{proof}

\begin{corollary}[Extension of AR Law to Multi-Model Ensembles]
\label{cor:multi_model_ar_law}
Assume that every pairwise combination of multiple models satisfies Assumption \ref{ass:1}. If the initial weights of each model form a geometric progression, the principles of the two-model ensemble theorem can be naturally extended to multi-model ensembles.
\end{corollary}

\begin{proof}
Assume that there are $m$ models participating in the ensemble and these models are sorted by performance from lowest to highest. The exchange condition then becomes:
$$
\sum_{h=1}^{m} w_h \cdot q_{h,s,j} > \sum_{h=1}^{m} w_h \cdot q_{h,s,i}
$$
Here, $q_{h,s,j}$ denotes the output of the $h$-th model, and we have $ACC_{q_{x_2}} > ACC_{q_{x_1}}$ for any $x_2 > x_1$.

This expression can be decomposed into $C_m^2$ pairwise model combinations:
$$
w_{x_2} \cdot q_{x_2,s,j} + w_{x_1} \cdot q_{x_1,s,j} > w_{x_2} \cdot q_{x_2,s,i} + w_{x_1} \cdot q_{x_1,s,i}
$$

For each pairwise combination, as established previously, the AR Law holds. This means there exists a "safe window" for the weight ratio, defined by a lower and upper bound:
$$
\frac{q_{x_2, a, i_a} - q_{x_2, a, k_a}}{q_{x_1, a, k_a} - q_{x_1, a, i_a}} < \frac{w_{x_1}}{w_{x_2}} < \frac{q_{x_2, f, k_f} - q_{x_2, f, j_f}}{q_{x_1, f, j_f} - q_{x_1, f, k_f}}
$$
Let's denote the weight ratio $\frac{w_{x_1}}{w_{x_2}}$ as $k(x)$, and the lower and upper bounds as $LL(x)$ and $UL(x)$ respectively, where $x$ is the index for the combination, ranging from $1$ to $C_m^2$.

It is evident that if the pairwise exchange condition holds for all combinations, the overall condition must also hold (though the reverse is not necessarily true). Continuing with the logic of the previous proof, we can control the ratio $k(x)$. When $k(x) < \min(UL(x))$, as its value increases, the number of possible correct substitutions for samples in set T will increase, while incorrect substitutions for samples in set F will not occur.

Since the model weights form a geometric progression, we can set $k(x) \in [r, r^{m-1}]$, where $r$ is a constant common ratio. This results in the weights $\{w_h\}$ forming an increasing geometric sequence. Given the constraint $\sum_{h=1}^{m} w_h = 1$, we can solve the base weight:
$$
w_1 = \frac{1-r}{1-r^m}
$$

This initial weight distribution relates $k(x)$ to $r$. Therefore, we can control $k(x)$ for all pairwise combinations by adjusting $r$. The role of $r$ here becomes equivalent to the weight ratio $\frac{1-w}{w}$ in the original two-model proof.

Consequently, we only need to set the upper bound as $\min(UL(x))$. As long as we choose an $r$ such that:
$$
LL(x) < r < \min(UL(x))
$$
we can similarly prove analogous theorems for the multi-model ensemble case.

For the case of a strict increase in accuracy, we also need to choose $r$ such that $\max(LL(x)) < r < \min(UL(x))$ to ensure that the overall exchange condition is met definitively.

\end{proof}

\subsubsection{Concluding Remarks}
\label{A.4}
Theorems \ref{thm:ar_law_improvement}, \ref{thm:ar_law_max_improvement}, and \ref{thm:ar_law_weight_variation} are collectively termed the AR Law. It reveals the principles governing how the accuracy of a fused model changes under certain assumptions. We have observed in tens of thousands of experiments that these laws hold, and combined with some existing research findings, we believe the assumptions of the AR Law are generally valid.

In fact, from the above theorems, it can be seen that the main issues of the AR framework are twofold: finding a method to measure the difference in judgment between two models, and introducing a quantity that reflects our prior judgment of the capabilities of the two models to measure when the difference reaches a threshold for "Accept" and "Reject". This allows us to accept good changes and reject bad ones. In the method described above, this is achieved through probabilities to reflect judgment differences, and the weight $w$ (or $\beta$) reflects the prior judgment.

What we demonstrate in the appendix is the difference in judgments reflected through similarity, and the a priori judgment expressed through the exponent $p$. According to our experimental observations, the methods in the appendix based on the AR framework are also strictly in accordance with the AR law we proposed.

\subsection{Discussion and experiments in Chapters Three and Four.}
\label{B}
\subsubsection{Larger models have more optional tokens}
\label{B.1}
\begin{figure}[H]
    \centering
    \includegraphics[width=\textwidth]{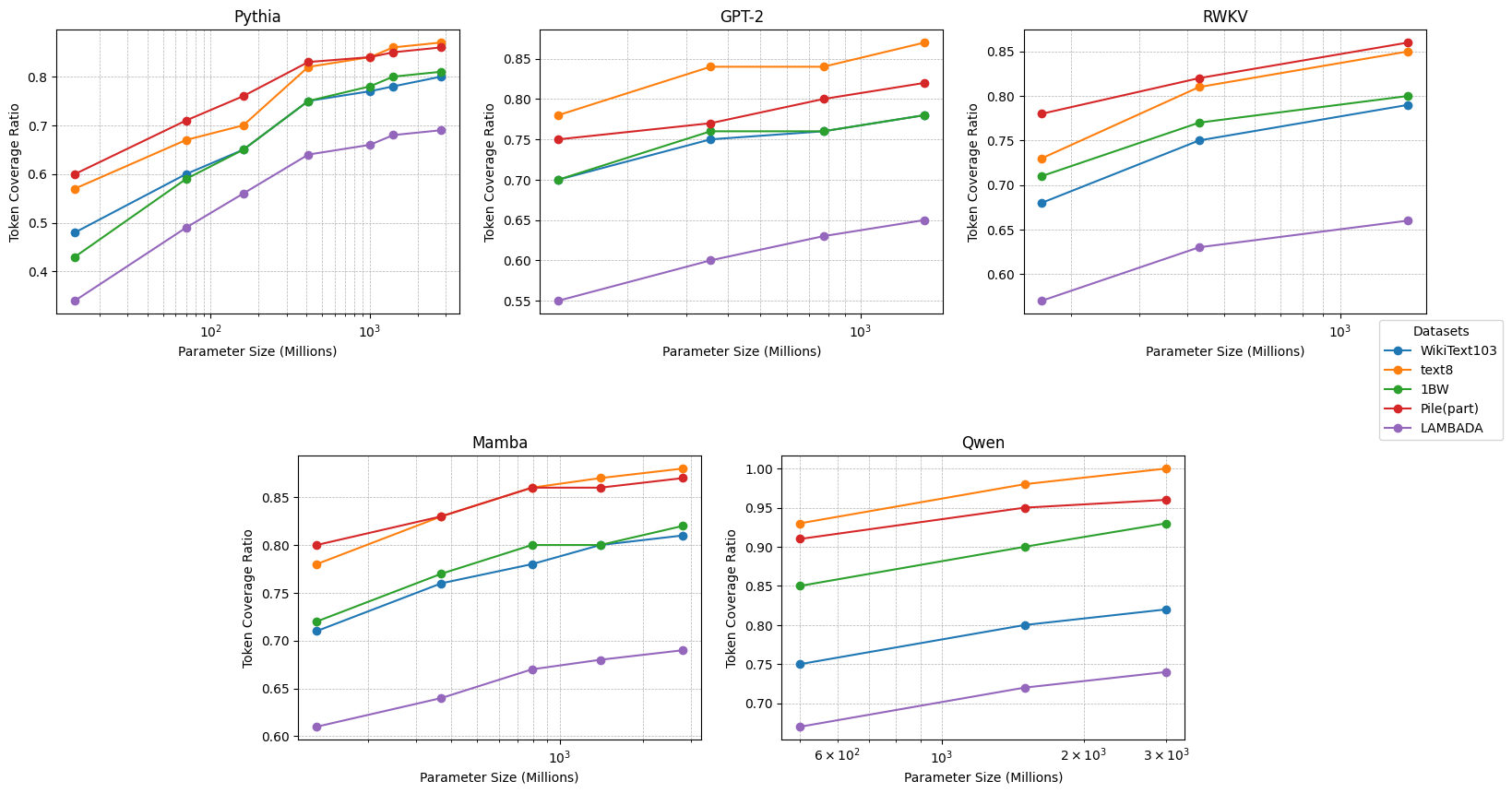}
    \caption{Proportion of tokens generated by scaling law model with different parameters to the total tokens of the dataset under different datasets.}
    \label{fig:both-models}
\end{figure}

The number of tokens that the model may generate across all datasets increases with the increase in the model parameters.

\subsubsection{Variation of Optimal Weight Ratio \texorpdfstring{$\alpha$}{alpha} with Parameter Ratio \texorpdfstring{$k$}{k}}
\label{B.2}
\begin{figure}[H]
    \centering
    \includegraphics[width=\textwidth]{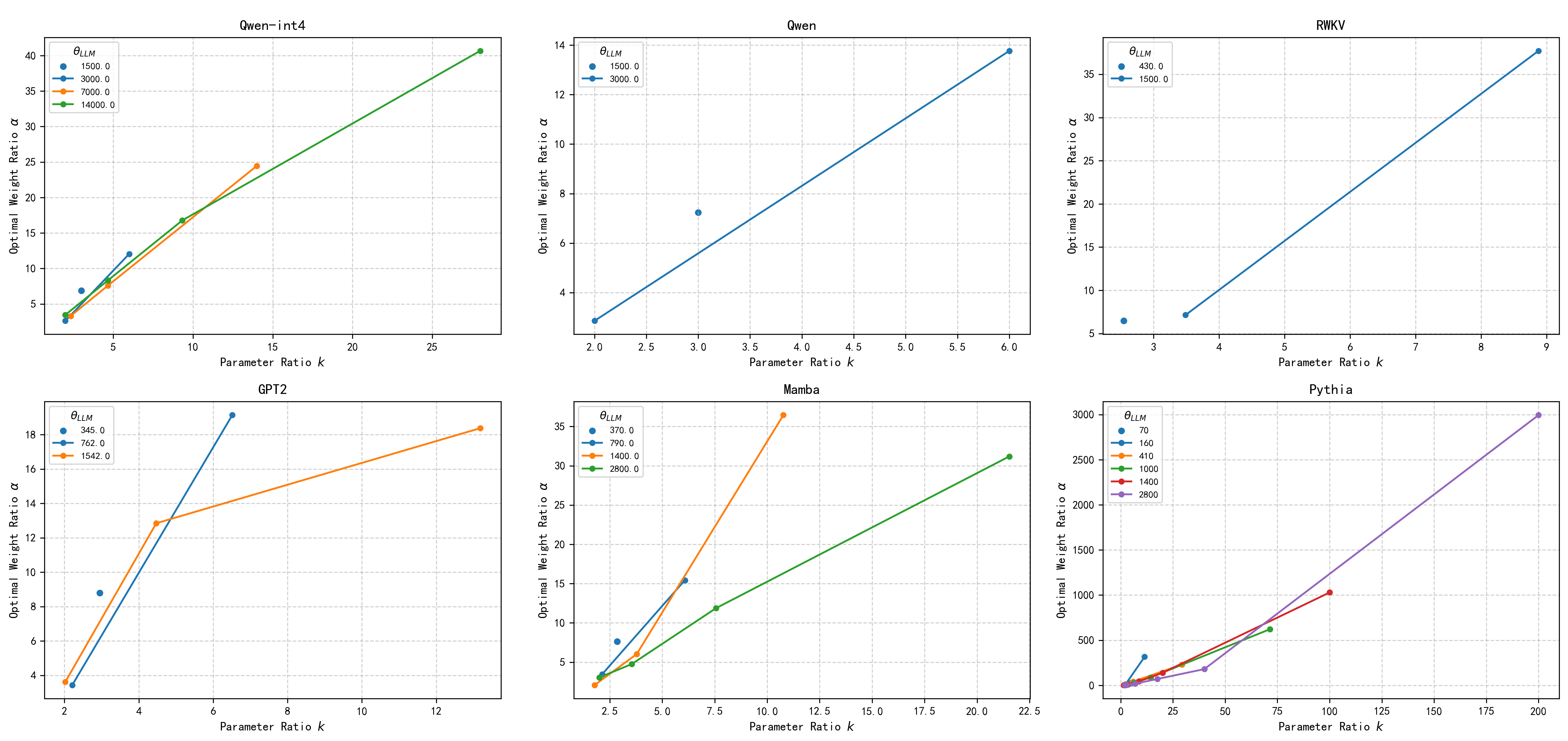}
    \caption{Variation of Optimal Weight Ratio $\alpha$ with Parameter Ratio $k$ for Fixed Main Parameter $\theta_{LLM}$.}
    \label{fig:both-models}
\end{figure}
Figure 4 shows that under the condition of the same main parameters $\theta_{LLM}$, the optimal weight ratio $\alpha$ for each model shows an increasing trend with increasing parameter ratio.

\subsubsection{AR framework based on cosine similarity}
\label{B.3}
As discussed in the main text, the AR framework emphasizes measuring judgment differences between models and ensembling relative overfitting information. Here, we present an alternative approach that, although not entirely successful, provides valuable insights into relative overfitting and the AR law.

We extract the embedding spaces from two models and compute the cosine similarity between each pair of tokens, storing these values for efficient retrieval during inference (see Algorithm1). We prioritize the LLM's output during inference while using the SLM as a reference. Specifically, we obtain the probability distribution from the LLM and identify the token with the highest probability from the SLM. We then retrieve the cosine similarity vector between each token in the LLM's vocabulary and the SLM's highest probability token. Since these similarity values do not form a strict probability space as in our primary method, we employ element-wise multiplication to combine this similarity vector with the LLM's output probabilities. To incorporate the relative overfitting information, we introduce a hyperparameter $p$ as the power exponent for the similarity vector (Algorithm2).

\begin{algorithm}
\caption{Computing Cosine Similarity Matrix}
\begin{algorithmic}[1]
\Require model, chunkSize
\Ensure packedData, metadata
\State \Comment{$E$: Embedding matrix, $V$: Vocabulary size, $S$: Similarity matrix}
\State $E \gets \text{ExtractEmbeddings}(model)$
\State $E_{norm} \gets \frac{E}{\|E\|_2}$ \Comment{Normalize embeddings}
\State $V \gets E_{norm}.\text{shape}[0]$
\State $S \gets \text{zeros}(V, V)$ 
\State Compute similarity matrix by iterating through chunks:
\For{$i \gets 0$ to $V-1$ \textbf{step} chunkSize}
    \For{$j \gets i$ to $V-1$ \textbf{step} chunkSize}
        \State $S_{chunk} \gets E_i \cdot E_j^T$
        \If{$i = j$}
            \State $S[i:i+chunkSize, i:i+chunkSize] \gets \text{upperTriangular}(S_{chunk})$
        \Else
            \State $S[i:i+chunkSize, j:j+chunkSize] \gets S_{chunk}$
        \EndIf
    \EndFor
\EndFor
\State $S_{quant} \gets \lfloor S \cdot 15 \rfloor$ \Comment{Quantize to 4-bit representation}
\State \Return PackUpperTriangular$(S_{quant})$ \Comment{Store in compressed format}
\end{algorithmic}
\end{algorithm}

\begin{algorithm}
\caption{Similarity-Based Model Fusion}
\begin{algorithmic}[1]
\Require $model_1$, $model_2$, input, packedData, metadata, $p$ (power law hyperparameter)
\Ensure Ensemble prediction probabilities
\State \Comment{$L$: Logits, $P$: Probabilities, $T$: Token indices, $S$: Similarity matrix}
\State $S \gets \text{UnpackUpperTriangular}(packedData, metadata)$ \Comment{Decompress similarity matrix}
\State $L_1 \gets model_1(input)$
\State $L_2 \gets model_2(input)$ \Comment{Obtain logits from both models}
\State $flat\_L_1 \gets \text{reshape}(L_1, [-1, V])$
\State $flat\_L_2 \gets \text{reshape}(L_2, [-1, V])$
\State $T_2 \gets \arg\max(flat\_L_2, \text{dim}=1)$ \Comment{Identify model 2's predicted tokens}
\State $simWeights \gets S[:, T_2]^p$ \Comment{Calculate similarity weights with power scaling}
\State $P_1 \gets \text{softmax}(flat\_L_1, \text{dim}=1)$
\State $P_{ensemble} \gets P_1 \odot simWeights$ \Comment{Fuse probabilities using similarity weights}
\State $P_{ensemble} \gets \frac{P_{ensemble}}{\sum P_{ensemble}}$ \Comment{Re-normalize to ensure valid probabilities}
\State \Return $P_{ensemble}$
\end{algorithmic}
\end{algorithm}
This approach is considered less successful because, despite implementing various optimization techniques, the storage requirements for the similarity matrix remain substantial without yielding performance improvements—in fact, it slightly underperforms compared to the method described in the main text. Furthermore, this approach still necessitates searching for appropriate hyperparameters under the AR law to ensemble the relative overfitting information between models effectively.

The primary value of discussing this method lies in exploring the AR law within the AR framework. Our experiments confirm that this approach still adheres to the pattern we proposed. Specifically, there exists an optimal value $p_0$ such that when $p < p_0$, performance gains increase with increasing $p$, while when $p > p_0$, performance gains decrease with increasing $p$. This observation further validates our theoretical framework.

The limited effectiveness of this more complex approach may be attributed to the fact that using similarity metrics and power-law adjustments as measures of judgment differences between models introduces additional complexity. Using only the highest probability token from the SLM may be insufficient to fully capture the nuanced relationships between models. This approach could achieve better performance improvements with more research and more sophisticated designs.

During our experimental process, we discovered that without introducing performance-related overfitting difference information between two models based on the AR framework, specifically the power-law hyperparameter p, we cannot guarantee that performance will consistently improve rather than decline. This further confirms our conclusion that the AR law, based on the AR framework and relative overfitting concept, is a prerequisite to ensure universal improvement rather than deterioration.

\clearpage
\newpage
\subsection{Experiments in the rest of the machine learning field}
\label{C}
In this appendix, we detail the experimental setups employed. For image recognition, a suite of pre-trained ResNet models of varying sizes (ResNet-18 to ResNet-152) was utilized. Subsequently, these models were fine-tuned for one epoch in the CIFAR-10, CIFAR-100\cite{67}, and Tiny ImageNet-200 datasets \cite{59, 60, 61, 62, 63} to assess classification efficacy across varying complexity levels. Currently, the ESM-2 model was used for the protein sequence prediction task. This task focuses on predicting protein sequences, which holds significant societal importance, including applications such as the design of novel protein sequences for therapeutic or industrial purposes. The predictive capabilities of the model were benchmarked in the UniRef50\cite{66} data set.

The experimental results demonstrate that our proposed ideas and framework can be universally applied to other areas of machine learning.

\subsubsection{Computer Vision}
\label{C.1}
Our experiments employed a consistent post-training approach across CIFAR-10, CIFAR-100, and Tiny-ImageNet datasets. We fine-tuned ResNet series models (from ResNet18 to ResNet152) pre-trained on ImageNet. The fine-tuning process began by replacing the final fully connected layer, adjusting the output dimensions to match the respective datasets: 10 classes for CIFAR-10, 100 for CIFAR-100, and 200 for Tiny-ImageNet. All images were resized to 224×224 pixels and normalized using ImageNet parameters (mean=[0.485, 0.456, 0.406], std=[0.229, 0.224, 0.225]). We utilized SGD optimizer with a learning rate 0.001 and momentum of 0.9, typically running for one epoch. Performance was evaluated using Top-1 accuracy metrics for both individual models and ensembles.

\begin{table}[htbp]
\caption{Top-1 Accuracy of ResNet models and AR framework ensembles on CIFAR-10, CIFAR-100, and Tiny ImageNet. Accuracies (\%) and relative improvements (\%) are shown. Improvements are calculated for the Large Models (LM) when combined with the Small Models (SM), relative to the best-performing single model's Top-1 performance.}
\centering
\begin{tabular}{cl ccc}
  \toprule
  \multicolumn{2}{c}{Model Configuration} & CIFAR-10 & CIFAR-100 & Tiny ImageNet \\
  \midrule
  \multicolumn{5}{c}{Single Model Baseline Performance (Top-1 Acc. (\%))} \\
  \midrule
  \centering resnet18 & -- & 90.44 & 55.47 & 38.69 \\
  \centering resnet34 & -- & 93.14 & 64.99 & 49.66 \\
  \centering resnet50 & -- & 93.69 & 65.54 & 55.19 \\
  \centering resnet101 & -- & 95.31 & 71.46 & 64.61 \\
  \centering resnet152 & -- & 95.39 & 72.80 & 67.18 \\
  \midrule
  \multicolumn{5}{c}{AR Framework Ensemble: Top-1 Improvement relative to LM Baseline (\%)} \\
  \midrule
  LM & SM & \multicolumn{3}{c}{Top-1 Imp. (\%) ↑} \\
  \midrule
  \centering resnet34 & resnet18 & 0.61\% & 1.55\% & 2.07\% \\
  \midrule
  \centering resnet50 & resnet18 & 0.21\% & 1.19\% & 0.49\% \\
  \centering resnet50 & resnet34 & 0.81\% & 5.00\% & 3.26\% \\
  \midrule
  \centering resnet101 & resnet18 & 0.17\% & 0.57\% & 0.01\% \\
  \centering resnet101 & resnet34 & 0.41\% & 1.43\% & 0.12\% \\
  \centering resnet101 & resnet50 & 0.39\% & 1.33\% & 0.51\% \\
  \midrule
  \centering resnet152 & resnet18 & 0.52\% & 0.84\% & 0.01\% \\
  \centering resnet152 & resnet34 & 0.29\% & 1.70\% & 0.06\% \\
  \centering resnet152 & resnet50 & 0.43\% & 1.48\% & 0.13\% \\
  \centering resnet152 & resnet101 & 0.85\% & 2.90\% & 2.22\% \\
  \bottomrule
\end{tabular}
\end{table}

\clearpage
\newpage
\subsubsection{AI For Science}
\label{C.2}
We evaluated different scales of ESM-2 protein language models (8M to 3B parameters) using the Masked Language Modeling (MLM) task. The experiment employed a 15\% random masking rate on amino acid residues while avoiding special tokens ([CLS], [SEP], [PAD], etc.). Performance was measured using perplexity (PPL), calculated as the exponentiated cross-entropy loss on masked positions. For excessively large loss values (>70), we returned infinity to prevent numerical overflow. This evaluation methodology effectively reflects the protein sequence modeling capabilities of ESM-2 models across different parameter scales.
\begin{table}[htbp]
\caption{Perplexity (PPL) of ESM-2 models and AR framework ensembles on UniRef50 (test\_00.csv). Baseline PPLs and relative PPL reductions (\%) for ensembles are shown. PPL reductions are calculated for the LM when combined with the SM, relative to the best performing single baseline PPL.}
\centering
\begin{tabular}{llc}
  \toprule
  \multicolumn{2}{c}{\textbf{Model Configuration}} & \textbf{PPL} $\downarrow$ \\
  \midrule
  \multicolumn{3}{c}{\textbf{Single Model Baseline Performance (PPL)}} \\
  \midrule
  esm2\_8m   & -- & 12.30 \\
  esm2\_35m  & -- & 10.65 \\
  esm2\_150m & -- & 9.17  \\
  esm2\_650m & -- & 7.59  \\
  esm2\_3b    & -- & 6.65  \\
  \midrule
  \multicolumn{3}{c}{\textbf{AR Framework Ensemble: PPL Reduction relative to LM Baseline (\%)}} \\
  \midrule
  LM & SM & PPL Reduction (\%) $\uparrow$ \\
  \midrule
  esm2\_35m  & esm2\_8m   & 0.56\% \\
  \midrule
  esm2\_150m & esm2\_8m   & 0.12\% \\
  esm2\_150m & esm2\_35m  & 0.36\% \\
  \midrule
  esm2\_650m & esm2\_8m   & 0.17\% \\
  esm2\_650m & esm2\_35m  & 0.14\% \\
  esm2\_650m & esm2\_150m & 0.01\% \\
  \midrule
  esm2\_3b    & esm2\_8m   & 0.17\% \\
  esm2\_3b    & esm2\_35m  & 0.17\% \\
  esm2\_3b    & esm2\_150m & 0.18\% \\
  esm2\_3b    & esm2\_650m & 0.78\% \\
  \bottomrule
\end{tabular}
\end{table}

The experimental results of using ResNet for image classification and ESM2 for protein sequence prediction demonstrate that the straightforward method proposed based on the AR framework is still generally effective in fields beyond NLP. However, it is important to note that in nonlanguage modeling problems, the performance of the AR law does not exhibit the same strictness as it does in language modeling problems. Although the overall trend aligns with the AR law, small fluctuations may occur locally. This is because the AR framework is based on relative overfitting for correction, which is intuitive in fundamental language modeling problems, but exhibits more complexity in other scenarios. Therefore, this paper posits that further in-depth research on relative overfitting and the AR law is necessary.

\clearpage
\newpage

\subsection{Complementary Experiments}
\label{D}

\subsubsection{Supplementary Experiments of Chapter 4.1}
\label{D.1}
In Appendix D, we demonstrate the performance of Pythia, RWKV, Mamba, and Qwen (based on int4 quantization) within the AR framework, which is based on the AR framework, on the fundamental language modeling benchmarks WikiText2, WikiText103, text8, and Pile (with Pile being a randomly sampled portion of the test set), as well as the 1BW benchmark. The improvements under our framework are generally practical and largely conform to the AR law we proposed.

Qwen is a series of large language models developed by Alibaba Cloud in 2023. As a prominent commercial large language model, it competes with other industry models like GPT and Claude. The model architecture is based on the Transformer decoder framework, with optimizations for context length and instruction-following capabilities. We used five different sizes of the Qwen model: 0.5b, 1.5b, 3b, 7b, and 14b. We will present the quantization results based on int4 in Table 8 to facilitate formatting.

\begin{table}[H]
\caption{The table lists the test results of the commercial LLM Qwen based on our proposed AR framework in fundamental language benchmarks.}
\centering
\resizebox{\textwidth}{!}{
\begin{tabular}{llcccccc}
  \toprule
  LLM & SLM & WikiText2 & WikiText103 & text8 & 1BW & Pile(part) & LAMBADA \\
  Qwen & Qwen & PPL $\downarrow$ & PPL $\downarrow$ & PPL $\downarrow$ & BPC $\downarrow$ & PLL $\downarrow$ & PPL $\downarrow$ \\
  \midrule
  \multicolumn{8}{c}{\textbf{Single Model Baseline Performance}} \\
  \midrule
  \multicolumn{2}{c}{0.5b} & 12.41 & 14.56 & 1.00 & 40.87 & 10.74 & 37.22 \\
  \multicolumn{2}{c}{1.5b} & 9.48 & 10.49 & 0.86 & 30.66 & 8.15 & 29.17 \\
  \multicolumn{2}{c}{3b} & 8.51 & 9.19 & 0.80 & 26.79 & 7.30 & 26.23 \\
  \midrule
  \multicolumn{8}{c}{\textbf{Improvement of the evaluation metric relative to the best performing single model (\%)}} \\
  \midrule
  1.5b & 0.5b & 1.47\% & 1.42\% & 0.37\% & 1.21\% & 0.46\% & 1.21\% \\
  \midrule
  3b & 0.5b & 1.10\% & 1.15\% & 0.20\% & 0.66\% & 0.21\% & 0.65\% \\
  3b & 1.5b & 3.18\% & 3.29\% & 1.17\% & 2.50\% & 1.35\% & 2.27\% \\
  \bottomrule
\end{tabular}
}
\end{table}

Receptance Weighted Key Value (RWKV) is a novel architecture that combines the parallelizable training of Transformers with the constant-memory, linear-time inference of RNNs by leveraging a linear attention mechanism. It achieves performance on par with similarly sized Transformers. We used 169M, 430M, and 1.5B of RWKV in our experiments.

\begin{table}[H]
\caption{The table lists the test results of RWKV, a model that fuses RNN and Transformer structures, based on our proposed AR framework in fundamental language benchmarks.}
\centering
\resizebox{\textwidth}{!}{
\begin{tabular}{llcccccc}
  \toprule
  LLM & SLM & WikiText2 & WikiText103 & text8 & 1BW & Pile(part) & LAMBADA \\
  RWKV & RWKV & PPL $\downarrow$ & PPL $\downarrow$ & PPL $\downarrow$ & BPC $\downarrow$ & PLL $\downarrow$ & PPL $\downarrow$ \\
  \midrule
  \multicolumn{8}{c}{\textbf{Single Model Baseline Performance}} \\
  \midrule
  \multicolumn{2}{c}{169m} & 24.10 & 29.19 & 1.23 & 56.70 & 13.31 & 34.91 \\ 
  \multicolumn{2}{c}{430m} & 17.23 & 19.43 & 1.09 & 43.86 & 10.10 & 28.59 \\ 
  \multicolumn{2}{c}{1.5b} & 12.76 & 13.71 & 0.97 & 33.39 & 8.05 & 23.93 \\ 
  \midrule
  \multicolumn{8}{c}{\textbf{Improvement of the evaluation metric relative to the best performing single model (\%)}} \\
  \midrule
  430m & 169m & 1.88\% & 1.11\% & 0.40\% & 1.70\% & 0.65\% & 1.31\% \\
  \midrule
  1.5b & 169m & 0.26\% & 0.17\% & 0.10\% & 0.34\% & 0.28\% & 0.48\% \\
  1.5b & 430m & 1.18\% & 0.74\% & 0.38\% & 1.00\% & 0.62\% & 1.04\% \\
  \bottomrule
\end{tabular}
}
\end{table}

\clearpage
\newpage

Mamba is a linear-time sequence modeling architecture that replaces traditional attention modules with input-conditioned selective state-space model (SSM) layers, enabling content-based reasoning over long sequences in O(n) time while maintaining Transformer-comparable accuracy. Mamba has demonstrated representative performance across modalities such as language, audio, and genomics. In our experiments, we used the 130M, 370M, 790M, 1.4B, and 2.8B variants of Mamba.

\begin{table}[H]
\caption{The table lists the test results of Mamba, a model using the SSM architecture, based on our proposed AR framework in fundamental language benchmarks.}
\centering
\resizebox{\textwidth}{!}{
\begin{tabular}{llcccccc}
  \toprule
  LLM & SLM & WikiText2 & WikiText103 & text8 & 1BW & Pile(part) & LAMBADA \\
  Mamba & Mamba & PPL $\downarrow$ & PPL $\downarrow$ & PPL $\downarrow$ & BPC $\downarrow$ & PLL $\downarrow$ & PPL $\downarrow$ \\
  \midrule
  \multicolumn{8}{c}{\textbf{Single Model Baseline Performance}} \\
  \midrule
  \multicolumn{2}{c}{130m} & 19.59 & 23.26 & 1.14 & 51.04 & 11.70 & 31.71 \\ 
  \multicolumn{2}{c}{370m} & 14.40 & 16.05 & 1.02 & 38.55 & 9.05 & 26.13 \\ 
  \multicolumn{2}{c}{790m} & 12.49 & 13.51 & 0.95 & 32.99 & 7.95 & 23.51 \\ 
  \multicolumn{2}{c}{1.4b} & 11.44 & 12.16 & 0.92 & 30.58 & 7.40 & 22.32 \\ 
  \multicolumn{2}{c}{2.8b} & 10.21 & 10.60 & 0.87 & 27.95 & 6.69 & 20.69 \\ 
  \midrule
  \multicolumn{8}{c}{\textbf{Improvement of the evaluation metric relative to the best performing single model (\%)}} \\
  \midrule
  370m & 130m & 1.33\% & 0.83\% & 0.38\% & 1.00\% & 0.52\% & 1.26\% \\
  \midrule
  790m & 130m & 0.68\% & 0.35\% & 0.20\% & 0.39\% & 0.20\% & 0.76\% \\
  790m & 370m & 2.98\% & 2.29\% & 0.95\% & 1.79\% & 1.13\% & 1.90\% \\
  \midrule
  1.4b & 130m & 0.32\% & 0.18\% & 0.12\% & 0.18\% & 0.10\% & 0.57\% \\
  1.4b & 370m & 1.45\% & 1.07\% & 0.47\% & 0.90\% & 0.45\% & 1.19\% \\
  1.4b & 790m & 3.42\% & 3.00\% & 1.54\% & 3.04\% & 1.82\% & 2.87\% \\
  \midrule
  2.8b & 130m & 0.34\% & 0.26\% & 0.16\% & 0.25\% & 0.15\% & 0.53\% \\
  2.8b & 370m & 1.10\% & 0.80\% & 0.39\% & 0.66\% & 0.31\% & 0.87\% \\
  2.8b & 790m & 1.69\% & 1.31\% & 0.81\% & 1.71\% & 0.71\% & 1.56\% \\
  2.8b & 1.4b & 2.93\% & 2.58\% & 1.32\% & 2.67\% & 1.40\% & 2.30\% \\
  \bottomrule
\end{tabular}
}
\end{table}

\begin{table}[htbp]
\caption{The table lists the test results of Qwen (INT4 quantized), based on our proposed AR framework in fundamental language benchmarks.}
\centering
\resizebox{\textwidth}{!}{%
\begin{tabular}{llcccccc}
  \toprule
  LLM & SLM & WikiText2 & WikiText103 & text8 & 1BW & Pile(part) & LAMBADA \\
  Qwen-int4 & Qwen-int4 & PPL $\downarrow$ & PPL $\downarrow$ & BPC $\downarrow$ & PPL $\downarrow$ & PLL $\downarrow$ & PPL $\downarrow$ \\
  \midrule
  \multicolumn{8}{c}{\textbf{Single Model Baseline Performance}} \\
  \midrule
  \multicolumn{2}{c}{0.5b} & 13.62 & 16.22 & 1.04 & 46.70 & 12.04 & 41.77 \\
  \multicolumn{2}{c}{1.5b} & 10.12 & 11.29 & 0.88 & 32.54 & 8.68 & 30.82 \\
  \multicolumn{2}{c}{3b}   & 9.00  & 9.75  & 0.82 & 28.33 & 7.70 & 27.82 \\
  \multicolumn{2}{c}{7b}   & 7.66  & 8.19  & 0.75 & 24.72 & 6.75 & 24.34 \\
  \multicolumn{2}{c}{14b}  & 6.53  & 6.96  & 0.67 & 21.79 & 6.03 & 21.28 \\
  \midrule
  \multicolumn{8}{c}{\textbf{Improvement of the evaluation metric relative to the best performing single model (\%)}} \\
  \midrule
  1.5b & 0.5b & 2.10\% & 2.10\% & 0.48\% & 1.54\% & 0.78\% & 1.53\% \\
  \midrule
  3b & 0.5b & 1.59\% & 1.50\% & 0.28\% & 1.18\% & 0.44\% & 1.16\% \\
  3b & 1.5b & 4.29\% & 4.39\% & 1.55\% & 4.16\% & 2.37\% & 3.90\% \\
  \midrule
  7b & 0.5b & 0.60\% & 0.60\% & 0.10\% & 0.43\% & 0.09\% & 0.53\% \\
  7b & 1.5b & 1.27\% & 1.38\% & 0.45\% & 1.54\% & 0.52\% & 1.32\% \\
  7b & 3b   & 2.57\% & 2.84\% & 1.07\% & 3.01\% & 1.37\% & 2.28\% \\
  \midrule
  14b & 0.5b & 0.33\% & 0.46\% & 0.11\% & 0.23\% & 0.05\% & 0.24\% \\
  14b & 1.5b & 0.60\% & 0.88\% & 0.33\% & 0.69\% & 0.24\% & 0.49\% \\
  14b & 3b   & 1.06\% & 1.38\% & 0.56\% & 1.17\% & 0.48\% & 0.73\% \\
  14b & 7b   & 2.31\% & 2.99\% & 1.44\% & 2.48\% & 1.37\% & 1.59\% \\
  \bottomrule
\end{tabular}%
}
\end{table}

Pythia is an open-source suite of autoregressive Transformer models developed by EleutherAI to enable rigorous scientific research into large language model behavior and training dynamics, with fully public data provenance and checkpointing. It is a highly classic series of scaling law models. We used the 14M, 70M, 160M, 410M, 1B, 1.4B, and 2.8B variants in our experiments.
\begin{table}[H]
\caption{The table lists the test results of the classic multi-scale scaling law model Pythia based on our proposed AR framework in fundamental language benchmarks.}
\centering
\resizebox{\textwidth}{!}{
\begin{tabular}{llcccccc}
  \toprule
  LLM & SLM & WikiText2 & WikiText103 & text8 & 1BW & Pile(part) & LAMBADA \\
  Pythia & Pythia & PPL $\downarrow$ & PPL $\downarrow$ & PPL $\downarrow$ & BPC $\downarrow$ & PLL $\downarrow$ & PPL $\downarrow$ \\
  \midrule
  \multicolumn{8}{c}{\textbf{Single Model Baseline Performance}} \\
  \midrule
  \multicolumn{2}{c}{14m} & 95.80 & 134.06 & 1.66 & 234.11 & 39.09 & 81.20 \\ 
  \multicolumn{2}{c}{70m} & 39.16 & 52.15 & 1.42 & 100.97 & 19.97 & 49.40 \\ 
  \multicolumn{2}{c}{160m} & 24.20 & 29.73 & 1.26 & 66.36 & 13.89 & 37.17 \\ 
  \multicolumn{2}{c}{410m} & 16.14 & 18.23 & 1.08 & 47.20 & 9.84 & 28.55 \\ 
  \multicolumn{2}{c}{1000m} & 13.71 & 15.03 & 1.02 & 40.08 & 8.59 & 25.72 \\ 
  \multicolumn{2}{c}{1400m} & 12.48 & 13.39 & 0.97 & 35.32 & 7.97 & 24.06 \\ 
  \multicolumn{2}{c}{2800m} & 10.92 & 11.54 & 0.92 & 31.27 & 7.23 & 22.12 \\ 
  \midrule
  \multicolumn{8}{c}{\textbf{Improvement of the evaluation metric relative to the best performing single model (\%)}} \\
  \midrule
  70m & 14m & 0.47\% & 0.38\% & 0.19\% & 0.24\% & 0.27\% & 1.18\% \\
  \midrule 
  160m & 14m & 0.03\% & 0.01\% & 0.01\% & 0.02\% & 0.27\% & 0.39\% \\
  160m & 70m & 0.95\% & 0.51\% & 0.30\% & 0.81\% & 0.78\% & 1.33\% \\
  \midrule 
  410m & 14m & 0.04\% & 0.04\% & 0.01\% & 0.11\% & 0.12\% & 0.41\% \\
  410m & 70m & 0.44\% & 0.14\% & 0.06\% & 0.56\% & 0.18\% & 0.66\% \\
  410m & 160m & 1.15\% & 0.50\% & 0.18\% & 1.38\% & 0.43\% & 1.19\% \\
  \midrule 
  1b & 14m & 0.02\% & 0.02\% & 0.00\% & 0.01\% & 0.09\% & 0.35\% \\
  1b & 70m & 0.18\% & 0.05\% & 0.02\% & 0.17\% & 0.09\% & 0.46\% \\
  1b & 160m & 0.43\% & 0.15\% & 0.07\% & 0.44\% & 0.16\% & 0.69\% \\
  1b & 410m & 2.51\% & 1.91\% & 0.97\% & 1.91\% & 1.02\% & 1.93\% \\
  \midrule 
  1.4b & 14m & 0.01\% & 0.02\% & 0.00\% & 0.00\% & 0.14\% & 0.28\% \\
  1.4b & 70m & 0.10\% & 0.04\% & 0.01\% & 0.05\% & 0.14\% & 0.38\% \\
  1.4b & 160m & 0.20\% & 0.09\% & 0.04\% & 0.16\% & 0.19\% & 0.54\% \\
  1.4b & 410m & 0.98\% & 0.66\% & 0.33\% & 0.65\% & 0.41\% & 0.99\% \\
  1.4b & 1b & 3.17\% & 2.66\% & 1.05\% & 2.24\% & 1.77\% & 2.40\% \\
  \midrule 
  2.8b & 14m & 0.01\% & 0.02\% & 0.00\% & 0.00\% & 0.16\% & 0.20\% \\
  2.8b & 70m & 0.08\% & 2.42\% & 0.01\% & 0.02\% & 0.15\% & 0.26\% \\ 
  2.8b & 160m & 0.15\% & 0.06\% & 0.03\% & 0.07\% & 0.19\% & 0.39\% \\
  2.8b & 410m & 0.36\% & 0.27\% & 0.16\% & 0.30\% & 0.20\% & 0.53\% \\
  2.8b & 1b & 1.07\% & 0.85\% & 0.44\% & 0.82\% & 0.59\% & 1.08\% \\
  2.8b & 1.4b & 1.83\% & 1.68\% & 0.98\% & 1.95\% & 1.10\% & 1.75\% \\
  \bottomrule
\end{tabular}
}
\end{table}
A series of experimental results based on models with different structures and purposes indicates that our proposed AR framework, which is based on relative overfitting, has achieved widespread effectiveness, and the results are in almost complete accordance with the AR law we proposed after tens of thousands of experiments.

\subsubsection{Supplementary Experiments of Chapter 4.2}
\label{D.2}
The utilization of models trained on diverse datasets within the main body of this work is primarily intended to offer a more granular illustration of the relative overfitting phenomenon under various conditions. This section introduces an approach for Large Language Model (LLM) rectification employing a singular auxiliary model. Specifically, we trained and fine-tuned a model across all datasets utilized in our experiments, designed to exhibit performance slightly inferior to the primary model on most benchmarks. This auxiliary model shares the same architecture (a simple Transformer) and parameter count (approximately 180M) as the primary model. Notably, we intentionally engineered its performance on the 1BW dataset to be marginally superior to that of the main model. The corresponding results are presented in Table 10. It is pertinent to mention that the performance achieved on the 1BW dataset is still considerably below its potential optimum. Naturally, such non-generically trained models, through iterative fine-tuning, may undergo complex and multifaceted changes. Consequently, relative overfitting in these instances can be more intricate than models trained systematically according to established scaling laws. A more profound investigation into these complex scenarios remains a subject for future research.

\begin{table}
\caption{All models used the simplest Transformer architecture. For WikiText2, WikiText103, and LAMBADA results, models were trained on the WikiText2 training set, while for 1BW and text8 results, models were trained on the 1BW and text8 training sets, respectively.}

\centering
\begin{tabular}{lccccc}
  \toprule
  \multicolumn{1}{c}{Model} & WikiText2 & WikiText103 & LAMBADA & text8 & 1BW \\
  \multicolumn{1}{c}{Trained Model} & PPL $\downarrow$ & PPL $\downarrow$ & PPL $\downarrow$ & BPC $\downarrow$ & PPL $\downarrow$ \\
  \midrule
  GPT2-Train & 15.24 & 17.74 & 37.63 & 1.12 & 44.21 \\
  Other-Train & 12.85 & 23.97 & 32.14 & 1.04 & 22.79 \\
  \midrule
  \multicolumn{6}{c}{\textbf{Improvement of the evaluation metric relative to the best performing single model (\%)}} \\
  \midrule
  GPT2-1.5b & 55.26\% & 54.44\% & 73.96\% & 25.43\% & 63.84\% \\
  Pythia-2.8b & 58.92\% & 53.51\% & 64.84\% & 33.15\% & 71.89\% \\
  Mamba-2.8b & 57.13\% & 51.52\% & 63.29\% & 31.95\% & 69.91\% \\
  RWKV-1.5b & 62.25\% & 57.45\% & 66.26\% & 34.18\% & 72.98\% \\
  \bottomrule
\end{tabular}
\end{table}

\subsubsection{Supplementary Experiments of Chapter 4.5}
Here, we will present more results regarding the AR law. Experimental results indicate that our integration theory is universally valid. In fact, in all of our experiments, with the exception of a very few occurrences of one in ten thousand, the AR law holds universally.

\begin{figure}[h]
    \centering
    \includegraphics[width=0.7\textwidth]{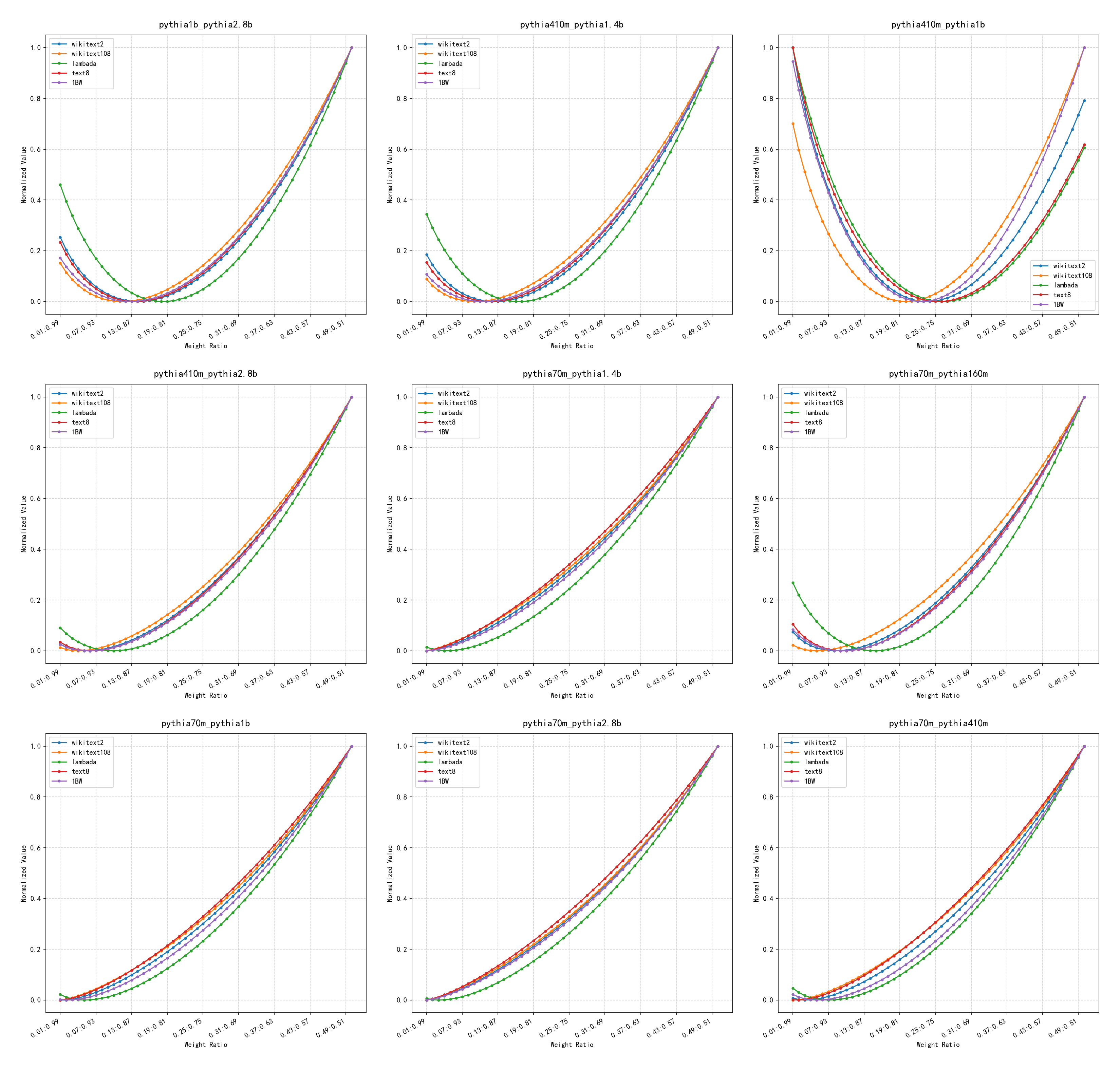}
    \vspace{-0.5cm} 
    \caption{The AR law of the Pythia model.}
    \label{p6}
\end{figure}

\begin{figure}[h]
    \centering
    \includegraphics[width=0.7\textwidth]{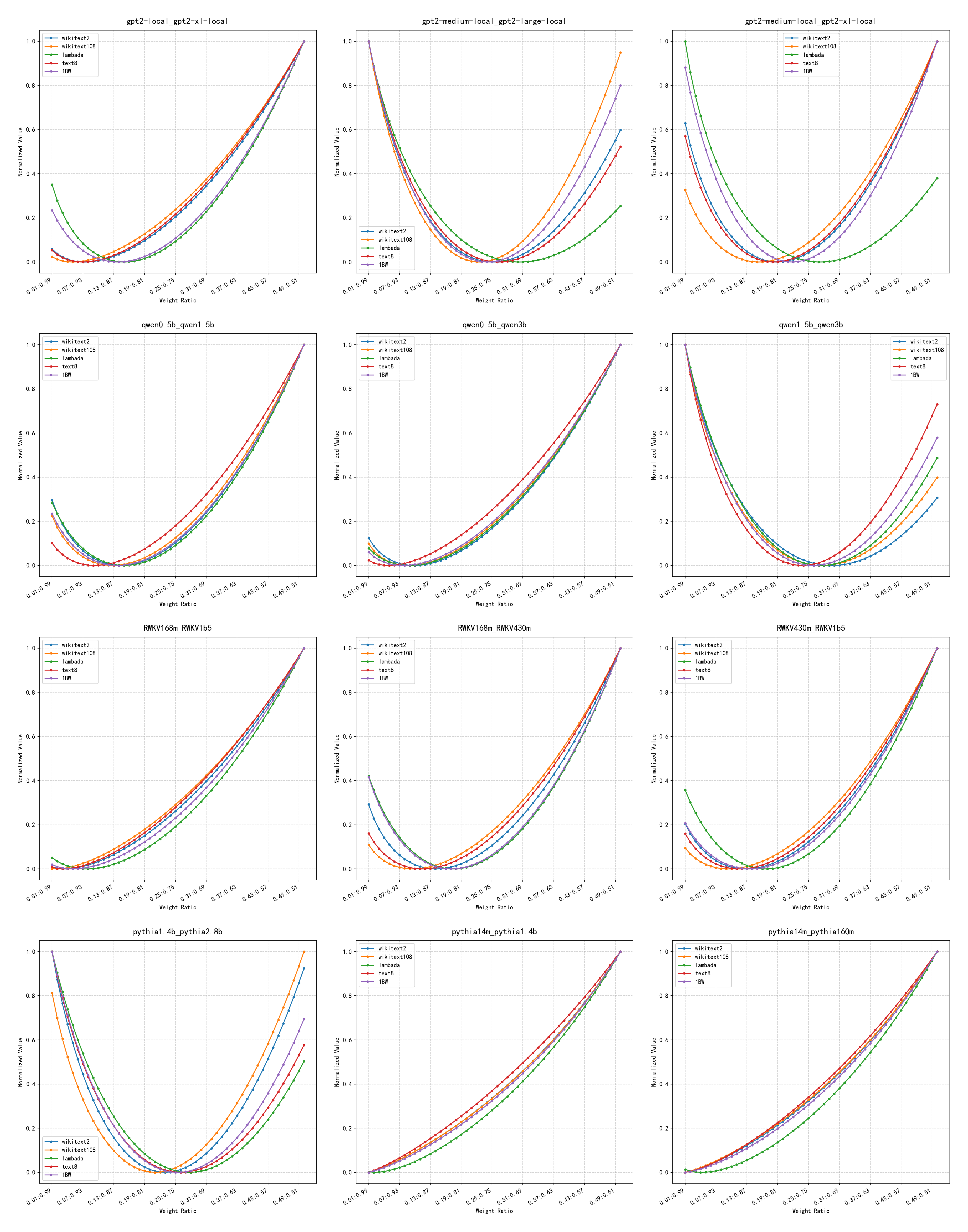}
    \vspace{-0.5cm} 
    \caption{Comparison results of the AR law for four different architectural combination models.}
    \label{p4}
\end{figure}

\begin{figure}[t]
    \centering
    \includegraphics[width=0.7\textwidth]{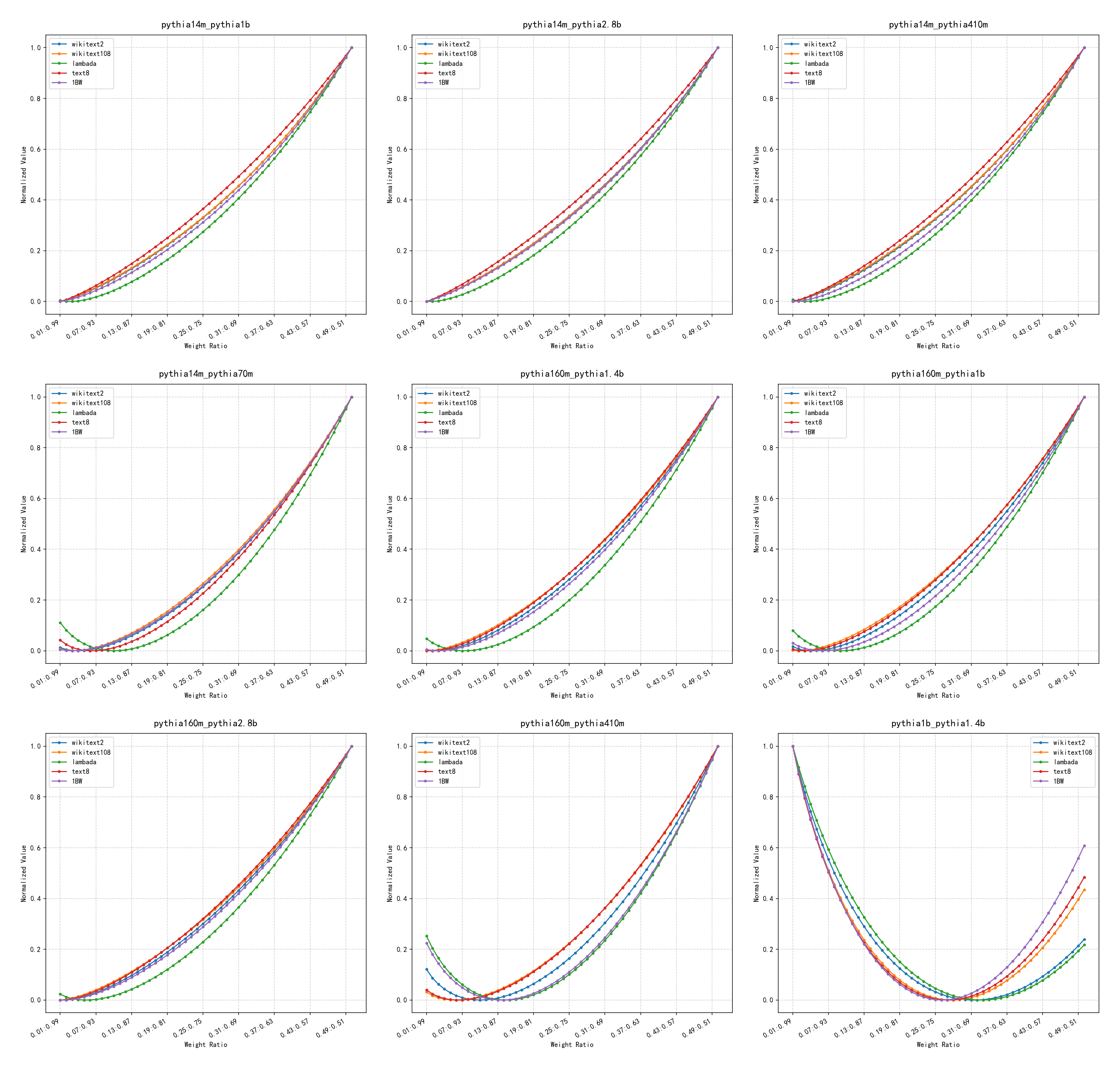}
    \vspace{-0.5cm} 
    \caption{The AR law of the Pythia model.}
    \label{p5}
\end{figure}

\clearpage
\newpage

\subsection{Experimental Setting/Details}
All experiments were conducted on an RTX 4090 graphics card.
\subsubsection{Experimental Section 5.2}

In Section 5.2, the custom models (\texttt{SimpleTransformer}) were trained using tokenizers derived from various reference pre-trained models (e.g., GPT2, Mamba).

The custom model employed a standard transformer encoder architecture with the following parameters:
\begin{itemize}
    \item Transformer Encoder Layers: 6
    \item Attention Heads per Layer: 6
    \item Hidden Size / Embedding Dimension: 384
    \item Feedforward Network Dimension: 1536 ($4 \times \text{hidden\_size}$)
    \item Activation Function: GELU
    \item Dropout Rate: 0.1
    \item Maximum Sequence Length / Positional Embeddings: 1024
    \item Vocabulary Size: Determined by the specific reference tokenizer used for each training run.
\end{itemize}

The training process for these models utilized the following hyperparameters:
\begin{itemize}
    \item Random Seed: 42
    \item Batch Size (per device): 2
    \item Gradient Accumulation Steps: 4 (Effective Batch Size: 8)
    \item Optimizer: AdamW
    \item Learning Rate: 5e-5
    \item LR Scheduler: Cosine schedule with 10\% warmup steps
    \item Gradient Clipping: Maximum norm of 1.0
    \item Mixed Precision Training: Enabled
\end{itemize}

\subsubsection{Experimental Section 5.3}
In Section 5.3, we describe the training of three types of sequential language models: LSTM, GRU, and Transformer.

For each architecture type, separate models were trained by varying the embedding dimension (\texttt{embed\_size}) across the values: **16, 32, 128, 256, 512, 768, and 1024**.

The remaining architectural parameters were fixed based on the script's defaults for each specific architecture type:
\begin{itemize}
    \item \textbf{For LSTM and GRU models:}
        \begin{itemize}
            \item Hidden Size (\texttt{hidden\_size}): 128
            \item Number of Recurrent Layers (\texttt{num\_layers}): 1
        \end{itemize}
    \item \textbf{For Transformer models:}
        \begin{itemize}
            \item Number of Attention Heads (\texttt{nhead}): 2
            \item Number of Transformer Encoder Layers (\texttt{num\_layers}): 1
            \item Sequence Length for Positional Encoding (\texttt{seq\_len}): 35
        \end{itemize}
    \item \textbf{Common to all architectures:}
        \begin{itemize}
            \item Dropout Rate: 0.2 (applied in relevant layers)
        \end{itemize}
\end{itemize}
Note: The vocabulary size (\texttt{vocab\_size}) was constructed based on the training data (WikiText2 train split), capped at a maximum of 10,000 tokens and requiring a minimum frequency threshold of 2.

The training process for all these models utilized the following hyperparameters:
\begin{itemize}
    \item Epochs: 5
    \item Batch Size: 20
    \item Optimizer: Adam
    \item Learning Rate: 0.001
    \item Loss Function: Cross-Entropy Loss (\texttt{nn.CrossEntropyLoss})
\end{itemize}

\subsection{Licenses}

\newlength{\licenselabelwidth}
\setlength{\licenselabelwidth}{22em} 

\begin{itemize}
    \item \makebox[\licenselabelwidth][l]{GPT2 model:} MIT license
    \item \makebox[\licenselabelwidth][l]{Qwen models:} Apache 2.0 license
    \item \makebox[\licenselabelwidth][l]{RWKV models:} Apache 2.0 license
    \item \makebox[\licenselabelwidth][l]{Mamba models:} Apache 2.0 license
    \item \makebox[\licenselabelwidth][l]{Pythia models:} Apache 2.0 license
    \item \makebox[\licenselabelwidth][l]{ResNet models:} Apache 2.0 license
    \item \makebox[\licenselabelwidth][l]{ESM2 models:} MIT license
    \item \makebox[\licenselabelwidth][l]{WikiText2 dataset:} CC BY-SA 3.0 license and GFDL
    \item \makebox[\licenselabelwidth][l]{WikiText103 dataset:} CC BY-SA 3.0 license and GFDL
    \item \makebox[\licenselabelwidth][l]{text8 dataset:} License not specified 
    \item \makebox[\licenselabelwidth][l]{1BW (One Billion Word) dataset:} Apache 2.0 license
    \item \makebox[\licenselabelwidth][l]{Pile dataset:} MIT license
    \item \makebox[\licenselabelwidth][l]{LAMBADA benchmark:} CC BY 4.0 license
    \item \makebox[\licenselabelwidth][l]{CBT (Children's Book Test) benchmark:} GFDL
    \item \makebox[\licenselabelwidth][l]{ARC benchmark:} CC BY-SA 4.0 license
    
\end{itemize}
\end{document}